\documentclass[11pt]{article}

\usepackage[letterpaper,left=1in,right=1in,top=1in,bottom=1in]{geometry}
\usepackage[T1]{fontenc}
\usepackage{mathptmx}
\usepackage{courier}
\usepackage{amsmath, amssymb, amsthm}
\usepackage{booktabs}
\usepackage{colortbl}
\usepackage{enumitem}
\usepackage{graphicx}
\usepackage{float}
\usepackage[font=small,labelfont=bf]{caption}
\usepackage{microtype}
\usepackage{titlesec}
\usepackage{tikz}
\usepackage{varwidth}
\usetikzlibrary{arrows.meta, positioning, fit, backgrounds}
\usepackage[numbers,sort&compress]{natbib}
\usepackage[colorlinks=true,linkcolor=black,citecolor=black,urlcolor=black]{hyperref}
\usepackage[most]{tcolorbox}

\setlist[enumerate]{topsep=0.25em,itemsep=0.2em,leftmargin=1.6em}
\titlespacing*{\section}{0pt}{1.1em}{0.45em}
\titlespacing*{\subsection}{0pt}{0.8em}{0.3em}
\titleformat{\section}{\normalfont\large\bfseries}{\thesection}{0.8em}{}
\titleformat{\subsection}{\normalfont\normalsize\bfseries}{\thesubsection}{0.8em}{}

\newtheorem{proposition}{Proposition}

\newtheorem{corollary}{Corollary}

\newcommand{\E}{\mathbb{E}}
\newcommand{\Prob}{\mathbb{P}}
\newcommand{\Var}{\operatorname{Var}}
\newcommand{\Cov}{\operatorname{Cov}}
\newcommand{\Corr}{\operatorname{Corr}}
\newcommand{\neff}{n_{\mathrm{eff}}}
\newcommand{\deff}{d_{\mathrm{eff}}}

\definecolor{cnavy}{HTML}{1F4E79}
\definecolor{cink}{HTML}{16202B}
\definecolor{cpanel}{HTML}{EEF1F4}
\definecolor{carrow}{HTML}{8A94A0}

\newtcolorbox{takeaway}{enhanced, center, width=0.92\linewidth, colback=cpanel, colframe=cnavy,
  boxrule=0pt, leftrule=2.4pt, arc=2pt, left=11pt, right=11pt, top=7pt, bottom=7pt,
  fontupper=\small, before skip=0.8\baselineskip, after skip=0.8\baselineskip}

\newcommand{\papertitle}{When More Sampling Hurts: \\The Modal Ceiling and Correlation Ceiling of Test-Time Scaling}
\hypersetup{pdftitle={\papertitle}, pdfauthor={Yong Yi Bay and Kathleen A. Yearick}, pdfkeywords={test-time scaling, inference-time compute, repeated sampling, pass@k, coverage, best-of-n, self-consistency, effective sample size, design effect, intraclass correlation, correlation ceiling, modal ceiling, identifiability gap, reasoning models}}
\newsavebox{\papertitlebox}
\newcommand{\paperfrontmatter}{%
\begin{center}
\vspace*{-0.62in}
\begin{lrbox}{\papertitlebox}%
\begin{varwidth}{\textwidth}
\centering
{\Large\bfseries \papertitle\par}
\end{varwidth}%
\end{lrbox}%
\rule{\wd\papertitlebox}{1.1pt}\par
\vspace{0.55em}
\usebox{\papertitlebox}\par
\vspace{0.55em}
\rule{\wd\papertitlebox}{1.1pt}\par
\vspace{1.7em}
{\textbf{Yong Yi Bay}\textsuperscript{*}\hspace{2.4em}\textbf{Kathleen A. Yearick}\textsuperscript{*}\par}
\vspace{0.5em}
PhD, University of Illinois at Urbana-Champaign\par
\let\thefootnote\relax\footnotetext{\textsuperscript{*}Equal contribution. Correspondence: \texttt{\{yongyibay, kallie.a.yearick\}@gmail.com}.}
\vspace{1.9em}
\end{center}
}

\newenvironment{paperabstract}
{\begin{center}\textbf{\textsc{\textls[120]{Abstract}}}\end{center}
\begin{list}{}{\leftmargin=0.78in\rightmargin=0.78in}\item\small}
{\end{list}\normalsize}

\newcommand{\paperkeywords}[1]{\noindent\textbf{\textit{Keywords}} #1\par\vspace{0.65em}}

\begin{document}
\paperfrontmatter

\begin{paperabstract}
People overthink; language models over-sample, and the extra effort can talk both into a \emph{worse} answer. Reasoning systems answer a hard question by sampling it many times (\emph{test-time scaling}), and the more they draw, the more often a correct answer turns up somewhere, so \emph{coverage}, the fraction of problems with at least one correct try, climbs and appears to be progress. But a deployed system must return one answer, and choosing it, not knowing which try is right, is \emph{selection}; selection is capped, and past a point extra samples only make the model surer of a confident mistake, even as every draw adds cost. The gap between climbing coverage and stalled selection, the \emph{identifiability gap}, is the answer a model can produce but not pick. So the real question is not whether to sample but how far, and the answer is: not far. For picking an answer, the vote has already settled within a few dozen draws, the \emph{modal ceiling}; for scoring a benchmark, sooner still, the \emph{correlation ceiling}. Beyond that, extra draws cost compute and add nothing, and can even make the answer worse. This paper turns the cutoff into a single number, the \emph{effective number of samples}, that any sampling run already reveals. The bottleneck is recognizing a right answer, not generating one.
\end{paperabstract}

\paperkeywords{test-time scaling $\cdot$ inference-time compute $\cdot$ repeated sampling $\cdot$ pass@k $\cdot$ coverage $\cdot$ best-of-n $\cdot$ self-consistency $\cdot$ effective sample size $\cdot$ design effect $\cdot$ intraclass correlation $\cdot$ correlation ceiling $\cdot$ modal ceiling $\cdot$ identifiability gap}

\section{Introduction and roadmap}

A reasoning model can be made stronger at inference time by spending more compute, and the compute goes to one of two levers: longer reasoning, letting the model think more before it answers, or more sampling, drawing $n$ answers to one prompt and combining them \citep{wang2023selfconsistency,brown2024monkeys,snell2024scaling,wu2024inference}. This is test-time scaling, and it drives much of the progress in reasoning systems \citep{openai2024o1,deepseekai2025r1,muennighoff2025s1,zhang2025ttssurvey}; the sampling lever is the subject here. Its headline curve is \emph{coverage}, the fraction of problems on which at least one of $n$ samples is correct \citep{chen2021codex}, and it climbs over several orders of magnitude of $n$ \citep{brown2024monkeys}, reading as steadily rising capability. That reading is too generous. A deployed system must return a single answer, and with nothing to certify which sample is right it can only \emph{select} one, by frequency or a learned score; selection is capped, and more sampling does not lift it and can even lower it. A correct answer thus grows ever easier to reach but no easier to return: the bottleneck in test-time scaling is not generating a right answer, it is recognizing one. And because each draw costs compute, the question that matters is how far to sample, which the ceilings below answer with a small, goal-dependent budget (Figure~\ref{fig:two_ceilings}).

\begin{figure}[H]
\centering
\includegraphics[width=\linewidth]{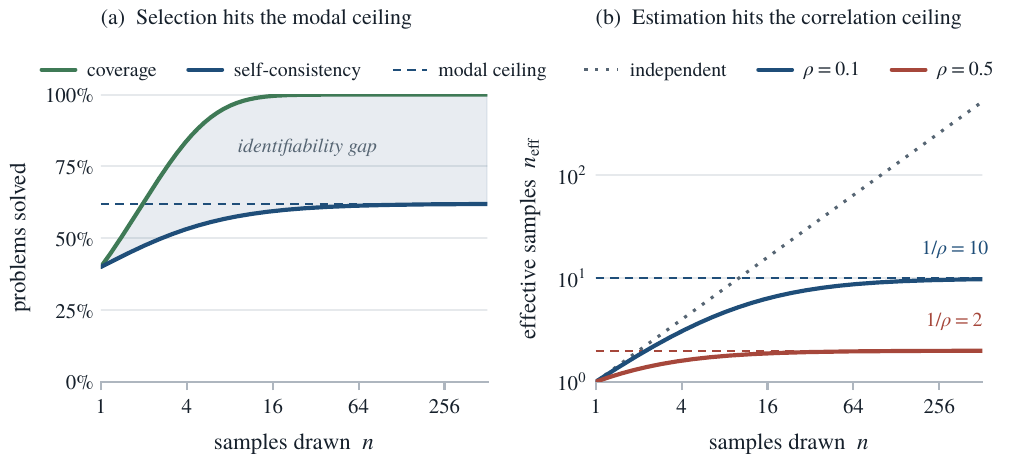}
\caption{The two ceilings of test-time sampling. \textbf{(a)} Coverage, self-consistency, and the identifiability gap between them. \textbf{(b)} The effective number of samples $n_{\mathrm{eff}}$ against the nominal count $n$, with the correlation ceiling $1/\rho$.}
\label{fig:two_ceilings}
\end{figure}

\textbf{The split.} Coverage and selection ask different questions of the same samples, and only one of them keeps improving. Coverage asks whether \emph{any} sample is correct; a verifier that could spot the correct one would let coverage ride past every limit toward what the model can ever reach. Selection must commit to one answer with no such verifier, and a plurality vote converges to the model's most common answer; on the problems where that answer is wrong, more samples only make the wrong answer win more surely, so selection saturates at the fraction of problems whose most common answer is correct, and can even decline as the budget grows. \citet{brown2024monkeys} report exactly this pattern without a mechanism, that ``majority voting and reward models plateau beyond several hundred samples and fail to fully scale with the sample budget.'' The distance between rising coverage and stalled selection is the set of problems the model can solve but cannot pick out, the \emph{identifiability gap}: it can generate the right answer without being able to select it (Figure~\ref{fig:concept}).

\begin{figure}[H]
\centering
\includegraphics[width=0.86\linewidth]{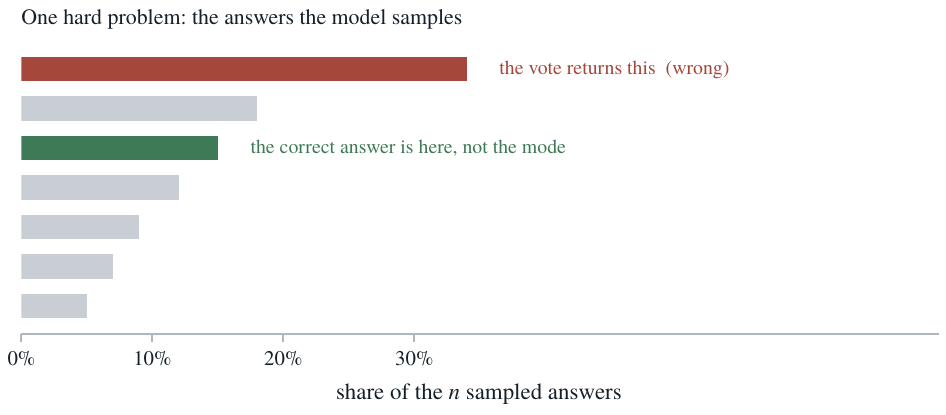}
\caption{Generated but not selected. On a hard problem the correct answer appears among the sampled attempts, so coverage finds it; but it is not the most common answer, so the plurality vote returns a confident wrong one.}
\label{fig:concept}
\end{figure}

\textbf{A second, separate ceiling.} Even the benchmark accuracy that sampling is meant to estimate is worth less than the sample count suggests, for a different reason. The $n$ attempts to a problem are not $n$ independent tries; they are a cluster drawn from one prompt, alike the way several people from one household are alike in a survey. The classical correction is the \emph{design effect} $\deff = 1+(n-1)\rho$ of \citet{kish1965survey}, where $\rho$ is the intraclass correlation among the attempts, so the $n$ correlated draws are worth only $n/\deff$ independent ones \citep{cochran1977sampling}. Read for test-time sampling, this is the \emph{effective number of samples}
\begin{equation}
\neff = \frac{n}{1 + (n-1)\rho},
\qquad
\neff \;\longrightarrow\; \frac{1}{\rho}\quad\text{as } n\to\infty ,
\label{eq:neff_intro}
\end{equation}
which saturates at a hard \emph{correlation ceiling} $1/\rho$: no budget makes $n$ correlated attempts worth more than $1/\rho$ independent ones (Section~\ref{sec:scope}). This ceiling governs estimation, not selection; the two are kept apart throughout. The identifiability gap is measured in Section~\ref{sec:gap}.

\textbf{What is and is not new.} The design effect and effective sample size are not new here; they are \citet{kish1965survey} and \citet{cochran1977sampling}, and the correction of Condorcet's theorem for correlated voters is older still \citep{ladha1992condorcet,boland1989majority}. Three recent works apply the same instrument to language-model outputs, but to different objects: \citet{kohli2026judges} measure the effective votes of a panel of \emph{distinct judge models} in evaluation, \citet{goel2025alike} quantify error correlation \emph{across models}, and \citet{nitarach2026aimo}, in a competition report, track the effective sample size of majority voting across mixed models. None treats single-model test-time scaling as cluster sampling, separates the estimation ceiling from a distinct selection ceiling, derives the modal-answer ceiling and its anti-scaling, or connects to the survey-sampling literature. On the coverage side, \citet{schaeffer2025powerlaws} and \citet{kazdan2025predicting} explain the power-law shape of coverage by a heavy-tailed distribution of \emph{per-problem} difficulty, and \citet{levi2024simplemodel} reaches a power law through a memorization ansatz; all assume attempts are conditionally independent given the problem, the within-problem $\rho_w=0$ case of the model below, from which the analysis here recovers a power law of the same form. The contribution here is the lens that separates the three quantities a nominal sample count confounds: a between-problem difficulty spread $\rho_b$ that caps benchmark estimation and shapes coverage, a within-problem dependence measured to be near zero, and the concentration of the answer distribution that caps selection at a modal-hit rate $\pi_{\mathrm{mode}}$ and makes it anti-scale, all grounded on released logs.

This work is a citable derivation, not a new algorithm. Section~\ref{sec:cluster} sets up test-time sampling as cluster sampling and fences the scope of the exact claims. Section~\ref{sec:neff} derives the effective number of samples, the correlation ceiling, and the marginal value of a sample. Section~\ref{sec:cov_sel} separates coverage from selection, explains the identifiability gap between them, and measures it on both an independent-draw log \citep{brown2024monkeys} and a dependent-draw log \citep{beeching2024scaling}. Section~\ref{sec:practice} decomposes within- and between-problem correlation, gives the compute-allocation rule, an estimator for $\rho$, and a summary table.

\section{Test-time sampling is cluster sampling}
\label{sec:cluster}

The entire argument rests on one reframing: a problem and its repeated attempts form a cluster, not a fresh draw each time. This section makes the correspondence precise and fences exactly where the claims that follow are exact.

Fix a prompt $q$ and draw $n$ responses $o_1,\dots,o_n$ from one model at a fixed decoding configuration (the same sampling settings each time). A verifier (an automatic checker that marks each answer right or wrong) scores each, giving binary success indicators
\begin{equation}
Y_i = \mathbf{1}\{\,o_i \text{ is correct for } q\,\} \in \{0,1\},
\qquad i = 1,\dots,n .
\label{eq:Y}
\end{equation}
Write $s = \Prob[Y_i = 1]$ for the per-attempt success probability and $K = \sum_{i=1}^n Y_i$ for the number correct. The three headline quantities of test-time scaling are functions of $(Y_1,\dots,Y_n)$:
\begin{equation}
\underbrace{\mathrm{pass}@n = \Prob[K \ge 1]\rule[-5pt]{0pt}{0pt}}_{\text{coverage}},
\qquad
\underbrace{\hat p = K/n\rule[-5pt]{0pt}{0pt}}_{\text{success fraction}},
\qquad
\underbrace{\mathbf{1}\{K > n/2\}\rule[-5pt]{0pt}{0pt}}_{\text{majority vote}} .
\label{eq:estimands}
\end{equation}
Best-of-$n$ (drawing $n$ answers and returning the one a learned reward model scores highest) \citep{cobbe2021gsm8k}, weighted voting, and self-consistency (returning the answer the samples agree on most often) are all read off the sampled distribution and so are functions of the same draws.

The independence baseline assumes $Y_1,\dots,Y_n$ are i.i.d.\ (independent and identically distributed) Bernoulli$(s)$ draws, each a coin that lands correct with probability $s$. Then $\mathrm{pass}@n = 1-(1-s)^n$, $\Var(\hat p) = s(1-s)/n$, and, for $s>\tfrac12$, the majority vote is correct with probability tending to $1$ as $n\to\infty$ \citep{chen2021codex}. The analysis keeps the marginal $s$ and replaces independence with exchangeability.

\textbf{Exchangeable attempts.} The attempts to one problem are exchangeable: relabeling them does not change their joint distribution, so only how many succeed matters, not which ones. By de Finetti's theorem \citep{definetti1937prevision}, an infinitely exchangeable binary sequence (one a fresh session can in principle extend to arbitrarily many attempts) is a mixture of i.i.d.\ sequences,
\begin{equation}
Y_i \mid \theta \;\overset{\text{i.i.d.}}{\sim}\; \text{Bernoulli}(\theta),
\qquad
\theta \sim G \text{ on } [0,1],
\label{eq:definetti}
\end{equation}
for some mixing distribution $G$ (the spread of these hidden success rates across problems) with mean $\E[\theta] = s$. In words, the model behaves as if each fresh session first draws a hidden success rate $\theta$ for the problem, then every attempt in that session is an independent $\theta$-weighted coin: a good session lands in a strong reasoning basin (large $\theta$), a weak one in a poor basin (small $\theta$). The single number that summarizes the dependence is the \emph{intraclass correlation} $\rho$, how strongly two attempts on the same problem move together, and the standard measure for clustered binary data,
\begin{equation}
\rho \;=\; \Corr(Y_i, Y_j) \;=\; \frac{\Var(\theta)}{s(1-s)} \;\in\; [0,1],
\qquad i \ne j .
\label{eq:icc}
\end{equation}
The representation forces $\Var(\theta)\ge0$, so $\rho\ge0$: exchangeable attempts are non-negatively correlated. Independence is $\rho = 0$ (a degenerate $G$ at $s$); $\rho = 1$ is total collapse, where a session is all correct or all wrong together. Equation~\eqref{eq:icc} is the bridge from the spread of the latent rate $\theta$ to the intraclass correlation that drives the design effect. This $\rho$ is a correlation of \emph{correctness}; the concentration of the answer \emph{strings}, which governs selection, is a separate quantity, introduced in Section~\ref{sec:selection}. Figure~\ref{fig:correspondence} states the correspondence.

\begin{figure}[H]
\centering
\resizebox{\linewidth}{!}{%
\begin{tikzpicture}[
  font=\small,
  col/.style={text=cink, align=center, inner xsep=10pt, inner ysep=7pt,
              minimum height=11mm, minimum width=27mm},
  box/.style={draw=cnavy, fill=cpanel, line width=0.7pt, col},
  accent/.style={col, draw=cnavy, fill=cnavy, text=white, line width=0.7pt},
  head/.style={text=carrow, align=center, font=\small\bfseries},
  flow/.style={-{Stealth[length=2.6mm, width=2.0mm]}, line width=0.8pt,
               draw=carrow, shorten >=1.5pt, shorten <=1.5pt}
]
\node[head] (h1) at (0,1.45) {sampling frame};
\node[head] (h2) at (3.4,1.45) {cluster};
\node[head] (h3) at (6.8,1.45) {within-cluster draws};
\node[head] (h4) at (10.7,1.45) {correction};

\node[box] (s1) at (0,0) {population};
\node[box] (s2) at (3.4,0) {household};
\node[box] (s3) at (6.8,0) {$m$ members};
\node[accent] (s4) at (10.7,0) {$\deff = 1+(m{-}1)\rho$\\[2pt]$m \mapsto m/\deff$};
\draw[flow] (s1) -- (s2); \draw[flow] (s2) -- (s3); \draw[flow] (s3) -- (s4);

\node[box] (t1) at (0,-1.7) {benchmark};
\node[box] (t2) at (3.4,-1.7) {problem $q$};
\node[box] (t3) at (6.8,-1.7) {$n$ attempts $o_i$};
\node[accent] (t4) at (10.7,-1.7) {$\deff = 1+(n{-}1)\rho$\\[2pt]$n \mapsto \neff$};
\draw[flow] (t1) -- (t2); \draw[flow] (t2) -- (t3); \draw[flow] (t3) -- (t4);

\node[text=cink, font=\small\itshape, anchor=east] at (-1.7,0) {survey};
\node[text=cink, font=\small\itshape, anchor=east] at (-1.7,-1.7) {test-time};
\end{tikzpicture}}
\caption{The survey-to-test-time correspondence. \emph{Top:} a survey, where a household of $m$ members maps through $\deff = 1+(m-1)\rho$. \emph{Bottom:} test-time sampling, where a problem's $n$ attempts map through $\deff = 1+(n-1)\rho$ to $\neff$.}
\label{fig:correspondence}
\end{figure}

\subsection{Scope of the exact claims}
\label{sec:scope}

A canonical claim is only as strong as the assumptions it names, and the main one here is weaker than it looks. The load-bearing assumption is \emph{exchangeability}: the attempts to a problem are taken order-free, which gives them a common success rate $s$ and a common pairwise correlation $\rho$ and makes $\deff = 1+(n-1)\rho$ and the ceiling $1/\rho$ exact. Little of this rests on the correlation being the \emph{same} for every pair. The variance identity of Proposition~\ref{prop:deff} needs only the common rate $s$: for any pattern of pairwise correlations it holds with $\rho$ read as the \emph{mean} pairwise correlation $\bar\rho = \tfrac{1}{n(n-1)}\sum_{i\ne j}\Corr(Y_i,Y_j)$, so $\deff = 1+(n-1)\bar\rho$ and the ceiling is $1/\bar\rho$. Real sampling departs from equicorrelation (attempts that share a long prefix are more alike than attempts that branch early); then $1/\rho$ is simply read as $1/\bar\rho$, the average the estimator of Section~\ref{sec:estimate} already returns. What the ceiling truly requires is only that the correlation be positive and not vanish, that is, that sampling diversity stay bounded: the de Finetti mixture forces $\Var(\theta)\ge0$, hence $\rho\ge0$, and infinite exchangeability, the one further assumption, carries every $n\to\infty$ limit below. Equicorrelation is thus a convenience that turns an average into a single number, not a crutch the ceiling stands on.

Under these assumptions the design effect in Section~\ref{sec:neff} is \emph{exact} for one estimand: the success fraction $\hat p = K/n$, which depends on $(Y_i)$ only through their sum and so estimates the per-problem rate $s$ or, pooled, the benchmark mean. It governs how precisely a sampling budget pins down accuracy, the estimation use of test-time sampling. Two other quantities read the same draws but are not this mean and are treated on their own terms. Coverage, $\mathrm{pass}@n = \Prob[K\ge1]$, depends on the full mixing distribution $G$, and Section~\ref{sec:coverage} treats it directly through Equation~\eqref{eq:definetti}. Selection (self-consistency, best-of-$n$) returns the most-favored answer, the mode of the categorical answer distribution rather than a sum of the $Y_i$, and Section~\ref{sec:selection} gives it its own ceiling. The three are kept apart on purpose, because their different dependence on the same draws is the point here, and it is what lets one budget buy different amounts of each. The verifier is assumed accurate; an imperfect verifier adds a second, independent ceiling on usable accuracy, one not folded into $\rho$ here.

\section{The effective number of samples}
\label{sec:neff}

Everything downstream, the ceiling and the budget rule, follows from a single quantity: the variance of the number correct. This section computes it and reads off the effective number of samples.

\begin{proposition}[Design effect of test-time sampling]
\label{prop:deff}
Let $Y_1,\dots,Y_n$ be exchangeable with $\Prob[Y_i=1]=s$ and $\Corr(Y_i,Y_j)=\rho$ for $i\ne j$. Then the count $K=\sum_i Y_i$ has
\begin{equation}
\E[K] = ns,
\qquad
\Var(K) = ns(1-s)\,\big[\,1 + (n-1)\rho\,\big],
\label{eq:varK}
\end{equation}
and the success fraction $\hat p = K/n$ has $\Var(\hat p) = s(1-s)/\neff$ with the effective number of samples
\begin{equation}
\neff = \frac{n}{1 + (n-1)\rho}.
\label{eq:neff}
\end{equation}
\end{proposition}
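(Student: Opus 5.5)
The plan is a direct second-moment computation on the sum $K=\sum_i Y_i$; no appeal to the de Finetti representation \eqref{eq:definetti} is needed, since only the marginal rate $s$ and the pairwise correlation $\rho$ enter. First, linearity of expectation gives $\E[K]=\sum_i \E[Y_i]=ns$, because each $Y_i$ is Bernoulli with $\Prob[Y_i=1]=s$. Next, each indicator satisfies $Y_i^2=Y_i$, so $\Var(Y_i)=s-s^2=s(1-s)$. Converting the correlation hypothesis into a covariance then gives $\Cov(Y_i,Y_j)=\rho\,\sqrt{\Var(Y_i)\Var(Y_j)}=\rho\,s(1-s)$ for $i\ne j$.

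The core step is to expand the variance of the sum into diagonal and off-diagonal terms:
\[
\Var(K)=\sum_{i=1}^n \Var(Y_i)+\sum_{i\ne j}\Cov(Y_i,Y_j)=ns(1-s)+n(n-1)\rho\,s(1-s),
\]
where the off-diagonal sum runs over the $n(n-1)$ ordered pairs. Factoring out $ns(1-s)$ yields \eqref{eq:varK}. For the success fraction, scaling gives $\Var(\hat p)=\Var(K)/n^2=s(1-s)\,[1+(n-1)\rho]/n$. Defining $\neff$ as the $n$ that makes this equal $s(1-s)/\neff$, the i.i.d.\ form, produces \eqref{eq:neff}.

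There is no real obstacle here, only two points of bookkeeping. The first is to count ordered rather than unordered pairs in the covariance sum. The second is to note that the hypothesis $s\in(0,1)$ is implicit, since otherwise $\rho$ is undefined and both variances are trivially zero. I would also remark, matching Section~\ref{sec:scope}, that the same expansion with $\sum_{i\ne j}\Cov(Y_i,Y_j)=n(n-1)s(1-s)\bar\rho$ shows the identity holds for any correlation pattern with $\rho$ replaced by the mean pairwise correlation $\bar\rho$. This needs only a common marginal $s$, not full exchangeability.
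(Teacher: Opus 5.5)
Your proof is correct and follows the paper's own argument: linearity gives $\E[K]$, you split $\Var(K)$ into $n$ diagonal variances plus $n(n-1)$ ordered-pair covariances each equal to $\rho\,s(1-s)$, and you divide by $n^2$. Your side remarks also match the paper's scope discussion in Section~\ref{sec:scope}: the implicit requirement $s\in(0,1)$, and the observation that with only a common marginal $s$ the identity holds with $\rho$ replaced by the mean pairwise correlation $\bar\rho$.
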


\begin{proof}
Linearity gives $\E[K]=ns$. For the variance, $\Var(K) = \sum_i \Var(Y_i) + \sum_{i\ne j}\Cov(Y_i,Y_j)$. Each $\Var(Y_i)=s(1-s)$ and each of the $n(n-1)$ covariances equals $\rho\,s(1-s)$, so $\Var(K) = ns(1-s) + n(n-1)\rho\,s(1-s) = ns(1-s)[1+(n-1)\rho]$. Dividing by $n^2$ gives $\Var(\hat p) = s(1-s)[1+(n-1)\rho]/n = s(1-s)/\neff$.
\end{proof}

The bracket $\deff = 1+(n-1)\rho$ is the design effect: the factor by which correlation inflates the variance of the count relative to $n$ independent draws \citep{kish1965survey,cochran1977sampling}. Equivalently, the $n\times n$ covariance of the attempts is a constant diagonal plus a single rank-one term, $\Sigma = s(1-s)\big[(1-\rho)\mathbf{I} + \rho\,\mathbf{1}\mathbf{1}^\top\big]$, so the variance of the count collapses to the scalar $\deff$ without ever forming that matrix: the same structured-operator economy that lets large numerical systems be solved matrix-free \citep{bay2026no3d}. The estimator $\hat p$ behaves as if it were built from $\neff$ independent samples; Appendix~\ref{app:repro} confirms the variance identity~\eqref{eq:varK} against Monte Carlo simulation.

\subsection{The correlation ceiling}
\label{sec:ceiling}

This is the result this work is named for, and its single most important fact: the effective number of samples does not grow without bound. Equation~\eqref{eq:neff} has a finite limit.

\begin{corollary}[Correlation ceiling]
\label{cor:ceiling}
If $\rho > 0$, then $\neff$ increases in $n$ to the finite limit
\begin{equation}
\lim_{n\to\infty}\neff = \frac{1}{\rho},
\label{eq:ceiling}
\end{equation}
and $\neff$ reaches half of this ceiling at $n = (1-\rho)/\rho \approx 1/\rho$.
\end{corollary}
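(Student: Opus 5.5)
The plan is to work directly from the closed form $\neff = n/(1+(n-1)\rho)$ of Proposition~\ref{prop:deff} and treat $n$ as a real variable $n\ge1$ so that calculus is available. The statement has three parts (monotonicity, the limit, and the half-ceiling point), and I will handle them in that order.

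\textbf{Monotonicity.} I will rewrite the denominator as $1+(n-1)\rho = n\rho + (1-\rho)$. This gives
\[
\neff = \frac{n}{n\rho + (1-\rho)} = \frac{1}{\rho + (1-\rho)/n}.
\]
For $0<\rho<1$ the term $(1-\rho)/n$ is strictly decreasing in $n$. So the denominator decreases and $\neff$ is strictly increasing. In the boundary case $\rho=1$ we get $\neff\equiv1$, which is constant and equal to $1/\rho$. There the claim holds in the weak (nondecreasing) sense, and I will remark on this case separately.

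\textbf{The limit.} The same form gives the limit at once: $(1-\rho)/n\to0$, so $\neff\to 1/\rho$. Finiteness uses exactly the hypothesis $\rho>0$. It is also worth noting that $\neff < 1/\rho$ for every finite $n$ when $\rho<1$, so the ceiling is approached from below and never reached.

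\textbf{Half the ceiling.} I will set $n/(n\rho+1-\rho) = 1/(2\rho)$ and clear denominators:
\[
2\rho n = n\rho + 1 - \rho,
\]
so $n\rho = 1-\rho$, i.e.\ $n=(1-\rho)/\rho = 1/\rho - 1$. This is approximately $1/\rho$ when $\rho$ is small. The only point of care is that this $n$ need not be an integer, and for $\rho>1/2$ it is less than $1$. I expect this interpretive step, rather than any algebra, to be the main (mild) obstacle. I would handle it by stating the result for the continuous extension. Then, for integer budgets, monotonicity gives that $\neff\ge 1/(2\rho)$ exactly when $n\ge(1-\rho)/\rho$.
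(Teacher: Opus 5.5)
Your proposal is correct and takes the paper's route: rewrite the denominator as $1-\rho+n\rho$, let $n\to\infty$ to get $1/\rho$, and solve $\neff=1/(2\rho)$ to obtain $\rho n=1-\rho$. Your explicit monotonicity step via $\neff=1/[\rho+(1-\rho)/n]$ fills in what the paper asserts without spelling out, and your side remarks are accurate: the degenerate case $\rho=1$ gives $\neff\equiv1$, and $(1-\rho)/\rho<1$ when $\rho>1/2$.
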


\begin{proof}
Write $\neff = n/[\,1-\rho + n\rho\,] \to 1/\rho$ as $n\to\infty$. Setting $\neff = 1/(2\rho)$ gives $2\rho n = 1+(n-1)\rho$, i.e.\ $\rho n = 1-\rho$, so $n = (1-\rho)/\rho$.
\end{proof}

This is the central fact. A correlated cluster with intraclass correlation $\rho$ is worth at most $1/\rho$ independent draws, however large the budget (exactly under the equicorrelation of Section~\ref{sec:scope}, and as a limiting average otherwise); and it gets halfway there by $n\approx1/\rho$. Figure~\ref{fig:effective_samples} plots the ceiling. A model with $\rho=0.1$ caps out near ten effective samples: the thousandth draw is almost worthless. The ceiling is also why the marginal value of a sample collapses.

\begin{figure}[H]
\centering
\includegraphics[width=0.9\linewidth]{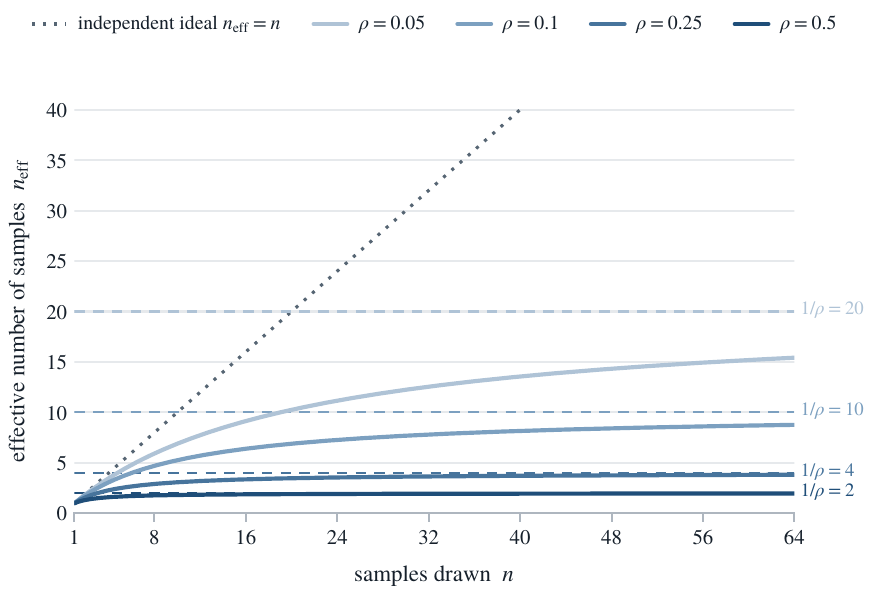}
\caption{The correlation ceiling. The effective number of samples $\neff = n/[1+(n-1)\rho]$ against the number drawn, for four correlation levels, each saturating at its ceiling $1/\rho$ (dashed); the independent ideal $\neff=n$ is the diagonal.}
\label{fig:effective_samples}
\end{figure}

\begin{corollary}[Value of the $n$-th sample]
\label{cor:marginal}
The marginal effective sample contributed by the $n$-th draw is
\begin{equation}
\frac{d\,\neff}{dn} = \frac{1-\rho}{\big[\,1+(n-1)\rho\,\big]^2}
\;\sim\; \frac{1-\rho}{\rho^2\,n^2}
\quad (n\to\infty),
\label{eq:marginal}
\end{equation}
so the worth of an added sample decays quadratically and is negligible once $n \gg 1/\rho$.
\end{corollary}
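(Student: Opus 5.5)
The plan is to treat $n$ as a continuous variable in the closed form~\eqref{eq:neff} of Proposition~\ref{prop:deff} and differentiate directly. Following the proof of Corollary~\ref{cor:ceiling}, I first rewrite the denominator as an affine function of $n$, $\neff = n/(a + \rho n)$ with $a = 1-\rho$. The quotient rule then gives
\[
\frac{d\,\neff}{dn} = \frac{(a+\rho n) - \rho n}{(a+\rho n)^2} = \frac{1-\rho}{\big[\,1+(n-1)\rho\,\big]^2},
\]
because the $\rho n$ terms in the numerator cancel and leave exactly the constant $1-\rho$. This establishes the exact expression in~\eqref{eq:marginal}.

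For the asymptotic, I assume $\rho>0$, as in Corollary~\ref{cor:ceiling}. The denominator satisfies $1+(n-1)\rho = \rho n\,\big(1 + (1-\rho)/(\rho n)\big)$, and the correction factor tends to $1$ as $n\to\infty$. Hence the ratio of $d\neff/dn$ to $(1-\rho)/(\rho^2 n^2)$ tends to $1$, which is the stated $\sim$. If $\rho=1$, both sides are identically zero; in that case I read the statement as "the derivative vanishes," since the ratio form is not meaningful there.

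For the claim that the marginal value is "negligible once $n\gg 1/\rho$," I compare with the independent baseline. At $\rho=0$ the derivative is exactly $1$. Once $\rho n \gg 1$, the denominator is dominated by $\rho n$, so the derivative is at most of order $1/(\rho n)^2 \ll 1$. As a consistency check, the discrete increment $\neff(n)-\neff(n-1)$ equals the derivative evaluated at some intermediate point, by the mean value theorem. The integer "$n$-th draw" therefore has the same quadratic decay, which justifies reading the continuous derivative as the value of one additional sample.

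There is no real obstacle here; the only care needed is the cancellation in the quotient rule and keeping track of the $\rho>0$ hypothesis for the asymptotic form.
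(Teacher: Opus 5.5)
Your proposal is correct and matches the paper's argument. The paper gives no separate proof environment, but the corollary follows exactly as you do it: differentiate $\neff = n/(1-\rho+\rho n)$, the rewriting used in the proof of Corollary~\ref{cor:ceiling}, and note that the denominator is dominated by $\rho n$ for large $n$. Your mean-value-theorem remark tying the derivative to the discrete increment is consistent with the paper's observation that the second draw adds $(1-\rho)/(1+\rho)$.
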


The marginal value starts at $d\,\neff/dn = 1-\rho$ at $n=1$ (the second draw already adds only $(1-\rho)/(1+\rho)$ effective samples) and then falls off as $1/(\rho n)^2$. Figure~\ref{fig:break_even} shows the collapse and marks the break-even point $n\approx1/\rho$ at which the curve has already given up most of its value. This is a budget rule, derived in Section~\ref{sec:allocate}: spending past $1/\rho$ samples buys redundancy, not signal.

A worked reading makes the decay concrete. At $\rho=0.1$ the first draw is worth a full independent sample; by Corollary~\ref{cor:marginal} the tenth is worth $(1-\rho)/[1+9\rho]^2 = 0.9/1.9^2 \approx 0.25$, and the hundredth only $0.9/10.9^2 \approx 0.008$. A thousand-sample run therefore carries the estimation information of roughly its first ten draws, and the rest refine the estimate by almost nothing. The decay is quadratic, so each tenfold increase in budget past the ceiling returns about a hundredth as much, which is why the curves of Figure~\ref{fig:break_even} fall to the axis so quickly.

\begin{figure}[H]
\centering
\includegraphics[width=0.9\linewidth]{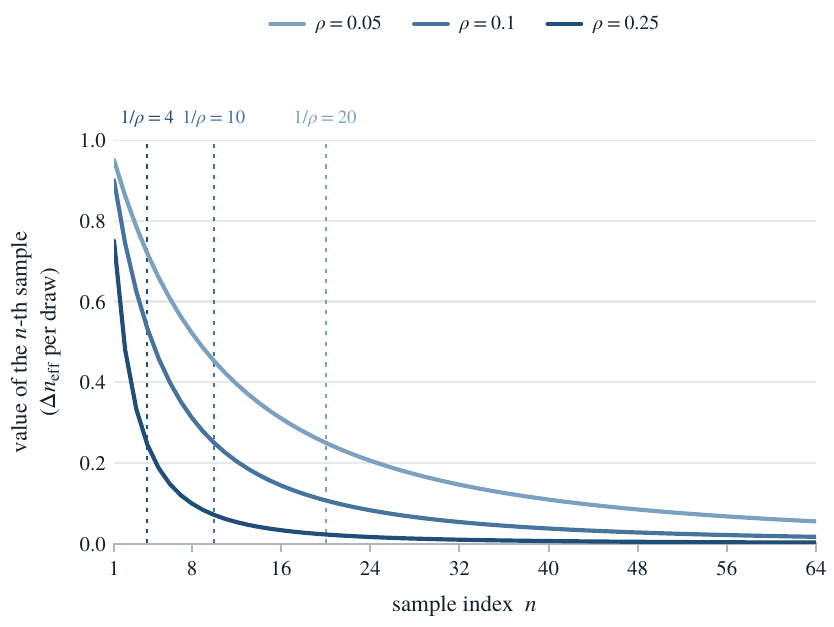}
\caption{The value of the $n$-th sample, $d\,\neff/dn$ from Corollary~\ref{cor:marginal}, against the sample index. Break-even points $n\approx1/\rho$ marked dotted (one per $\rho$).}
\label{fig:break_even}
\end{figure}

\section{Coverage rises, selection saturates}
\label{sec:cov_sel}

The most-cited puzzle in test-time scaling is that one curve keeps rising while another stalls, and no single mechanism connects them. This section gives the mechanism: coverage and selection read the same samples for different things, so they obey different ceilings. Coverage is capped only by what the model could ever produce; selection by whether its most common answer is correct. The gap between the two is exactly the solvable problems whose correct answer the vote fails to return.

Why selection matters at all bears stating, since coverage looks like the whole story. Coverage can be cashed only through a verifier that certifies a correct answer, and a perfect, general verifier is the rare exception rather than the rule. A few domains supply one: code against a test suite, a proof against a proof assistant, a numeric answer against a key. Most do not, and the verifiers that do exist score a proxy rather than truth, a finite test suite or a learned reward model that a wrong answer can still pass. Wherever no sound verifier is available, which is the typical deployment, a single answer can be returned only by selection, so the modal ceiling of Section~\ref{sec:selection}, not coverage, fixes the accuracy a system delivers. Coverage measures what the model could reach given a verifier it will not have; selection measures what ships.

\subsection{Coverage and the correlation tax}
\label{sec:coverage}

Coverage is the optimistic half of the split, the curve that keeps climbing; this subsection prices what correlation quietly subtracts from it. Coverage asks only whether some sample is correct, with no need to know which one. Under the mixture~\eqref{eq:definetti},
\begin{equation}
\mathrm{pass}@n = 1 - \Prob[K=0] = 1 - \E_\theta\big[(1-\theta)^n\big].
\label{eq:coverage}
\end{equation}
Coverage can only rise as the budget grows, and outside a hard core of unreachable problems it rises all the way.

\begin{proposition}[Coverage rises without a within-problem ceiling]
\label{prop:cov_mono}
Under the mixture~\eqref{eq:definetti}, $\mathrm{pass}@n$ is non-decreasing in $n$, strictly increasing whenever $G$ places mass on $(0,1)$, and converges to $1-\pi_0$, where $\pi_0=G(\{0\})$ is the fraction of problems the model never solves. For a fixed problem with reachability $p_q(c_q)=\pi>0$, coverage $1-(1-\pi)^n$ strictly increases to $1$.
\end{proposition}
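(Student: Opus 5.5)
The plan is to work directly from the representation \eqref{eq:coverage}, $\mathrm{pass}@n = 1-\E_\theta[(1-\theta)^n]$, and reduce every claim to a pointwise statement about the integrand $f_n(\theta)=(1-\theta)^n$ on $[0,1]$, followed by an exchange of limit and expectation. No correlation structure or earlier proposition is needed beyond the de Finetti mixture~\eqref{eq:definetti}. Everything is read off how $f_n$ behaves on the three pieces $\{0\}$, $(0,1)$, and $\{1\}$ of the support of $G$.

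\textbf{Monotonicity.} For every $\theta\in[0,1]$ we have $f_{n+1}(\theta)-f_n(\theta) = -\theta(1-\theta)^n \le 0$. Integrating against $G$ gives
\[
\mathrm{pass}@(n+1)-\mathrm{pass}@n = \E_\theta\big[\theta(1-\theta)^n\big]\ge 0 .
\]
For strictness, note that the integrand $\theta(1-\theta)^n$ is strictly positive on the open interval $(0,1)$ and vanishes at the endpoints. So if $G((0,1))>0$, the expectation of a nonnegative function that is positive on a set of positive $G$-mass is strictly positive. If instead $G$ is supported on $\{0,1\}$, the difference is $0$ for $n\ge1$, which is why strictness needs the stated condition.

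\textbf{Limit.} As $n\to\infty$, $f_n(\theta)\to \mathbf{1}\{\theta=0\}$ pointwise: it equals $1$ at $\theta=0$ and tends to $0$ for every $\theta\in(0,1]$. Since $0\le f_n\le 1$ and $G$ is a probability measure, dominated (or monotone) convergence gives $\E_\theta[f_n(\theta)]\to G(\{0\})=\pi_0$, hence $\mathrm{pass}@n\to 1-\pi_0$.

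\textbf{Per-problem statement.} For a fixed problem with reachability $\pi>0$, the mixing distribution is the point mass at $\pi$. The formula then reduces to $1-(1-\pi)^n$, and two cases arise. If $\pi<1$, the quantity $(1-\pi)^n$ is strictly decreasing to $0$, so coverage strictly increases to $1$. If $\pi=1$, coverage is identically $1$, so "strictly increases" holds only for $\pi\in(0,1)$; I will note that edge case rather than claim strictness there.

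The only step needing care is the limit interchange, and bounded convergence settles it immediately. The main thing to watch is stating the strictness condition precisely, namely that $G$ places positive mass on $(0,1)$ and not merely on $(0,1]$.
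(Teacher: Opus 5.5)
Your proposal is correct and follows the paper's proof essentially step for step: you use pointwise monotonicity of $(1-\theta)^n$ in $n$ (strict on $(0,1)$) integrated against $G$, dominated convergence to $G(\{0\})=\pi_0$, and the fixed-problem claim as the point mass at $\pi$. Your caveat that strictness fails at $\pi=1$ (coverage is identically $1$ for $n\ge1$) is a genuine edge case that the paper's statement glosses over, since its own point-mass argument only gives strictness when $\pi\in(0,1)$.
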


\begin{proof}
For each $\theta\in[0,1]$ the map $n\mapsto(1-\theta)^n$ is non-increasing, strictly for $\theta\in(0,1)$. Taking the mixture expectation, $\Prob[K=0]=\E_\theta[(1-\theta)^n]$ is non-increasing in $n$, strictly if $G$ charges $(0,1)$, so $\mathrm{pass}@n=1-\Prob[K=0]$ is non-decreasing, strictly so. Since $(1-\theta)^n\to\mathbf 1\{\theta=0\}$ pointwise, dominated convergence gives $\Prob[K=0]\to\pi_0$, hence $\mathrm{pass}@n\to1-\pi_0$. The fixed-problem case is the point mass $\theta=\pi$.
\end{proof}

Correlation always taxes coverage relative to the independent reading.

\begin{proposition}[Correlation tax on coverage]
\label{prop:tax}
For exchangeable attempts with per-attempt success $s$,
\begin{equation}
\mathrm{pass}@n \;\le\; 1 - (1-s)^n,
\label{eq:tax}
\end{equation}
with equality if and only if $\rho = 0$. The gap grows with the dispersion of $\theta$: a mean-preserving spread of the difficulty distribution (more very-easy and very-hard problems at the same average) can only widen it.
\end{proposition}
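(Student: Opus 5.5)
The plan is to apply Jensen's inequality to the convex map $\theta\mapsto(1-\theta)^n$ on $[0,1]$, working directly from the mixture representation~\eqref{eq:definetti} and the coverage formula~\eqref{eq:coverage}. Since $\E[\theta]=s$, Jensen gives $\E_\theta[(1-\theta)^n]\ge(1-\E\theta)^n=(1-s)^n$. Subtracting both sides from $1$ gives~\eqref{eq:tax} at once. For $n=1$ both sides equal $s$, so the interesting case is $n\ge2$; I would state the proposition for $n\ge2$, where the equality claim is nontrivial.

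For the equality case, I would use that $f(\theta)=(1-\theta)^n$ is \emph{strictly} convex on $[0,1]$ when $n\ge2$, because $f''(\theta)=n(n-1)(1-\theta)^{n-2}>0$ on $[0,1)$. Strict convexity on $[0,1)$, together with convexity at the endpoint $1$, is enough for strict Jensen. So equality holds if and only if $\theta$ is almost surely constant, that is, $\theta=s$ a.s., which means $\Var(\theta)=0$. By~\eqref{eq:icc} this is $\rho=0$ whenever $0<s<1$. The degenerate cases $s\in\{0,1\}$ force $\theta$ constant anyway, and there both sides agree; I would note this in a parenthetical, taking $\rho:=0$ by convention there.

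For the dispersion claim, I would set up the comparison as follows. Let $G'$ be a mean-preserving spread of $G$, so $G'$ dominates $G$ in the convex order. Then $\E_{G'}[\phi(\theta)]\ge\E_G[\phi(\theta)]$ for every convex $\phi$. Applying this with $\phi=f$ shows that $\Prob[K=0]$ weakly increases, hence $\mathrm{pass}@n$ weakly decreases. Since the independent benchmark $1-(1-s)^n$ is unchanged by a mean-preserving spread, the gap $\E_G[f(\theta)]-(1-s)^n$ can only widen. Under the same order, $\Var(\theta)$ and hence $\rho$ also weakly increase, which matches the reading ``grows with the dispersion.''

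I expect the main obstacle to be the precise meaning of ``mean-preserving spread.'' I would fix it as the Rothschild--Stiglitz notion: $\theta'\overset{d}{=}\theta+\varepsilon$ with $\E[\varepsilon\mid\theta]=0$ and both variables supported in $[0,1]$. With that definition, conditional Jensen gives the convex-order inequality directly, since $\E[f(\theta+\varepsilon)\mid\theta]\ge f(\theta)$. I would stress that the statement is monotonicity along this order, not a function of $\rho$ alone: two mixtures with the same $\rho$ can pay different taxes. That is why the proposition is phrased in terms of spreads rather than $\rho$.
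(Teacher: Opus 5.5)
Your proof is correct and follows the paper's route. Jensen's inequality for the convex map $\theta\mapsto(1-\theta)^n$ gives $\E_\theta[(1-\theta)^n]\ge(1-s)^n$, and strict convexity for $n\ge2$ makes equality equivalent to $\theta$ being degenerate, i.e.\ $\rho=0$. You also go beyond the paper's short proof in two places: you correctly note that the equality clause fails at $n=1$ (both sides equal $s$ for every $G$), and you prove the mean-preserving-spread claim via the convex order and conditional Jensen, which the paper asserts but does not argue.
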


\begin{proof}
The map $\theta \mapsto (1-\theta)^n$ is strictly convex on $[0,1]$ for $n\ge2$. By Jensen's inequality (the average of a convex function is at least the function of the average), $\E_\theta[(1-\theta)^n] \ge (1-\E[\theta])^n = (1-s)^n$, with equality iff $\theta$ is degenerate, i.e.\ $\rho=0$. Negating gives Equation~\eqref{eq:tax}.
\end{proof}

So the textbook curve $1-(1-s)^n$ is an upper bound, not a prediction: real coverage runs below it whenever samples are correlated. The shape of the shortfall is governed by the lower tail of $G$, the problems the model rarely solves.

\begin{proposition}[Power-law coverage from heterogeneity]
\label{prop:powerlaw}
Let $\theta \sim \mathrm{Beta}(\alpha,\beta)$ (a flexible family of success-rate distributions on the unit interval, standing in for the spread of problem difficulty), so $s = \alpha/(\alpha+\beta)$ and $\rho = 1/(\alpha+\beta+1)$. Then the miss rate is
\begin{equation}
\Prob[K=0] = \frac{B(\alpha, \beta+n)}{B(\alpha,\beta)}
\;\sim\; \frac{\Gamma(\alpha+\beta)}{\Gamma(\beta)}\, n^{-\alpha}
\qquad (n\to\infty),
\label{eq:powerlaw}
\end{equation}
so coverage approaches its limit as a power law $n^{-\alpha}$, not the exponential $(1-s)^n$.
\end{proposition}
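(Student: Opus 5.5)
The plan is to compute everything directly from the Beta density and then read off the large-$n$ behaviour from Stirling's formula for ratios of Gamma functions. First I would check the two moment identities. For $\theta\sim\mathrm{Beta}(\alpha,\beta)$ the mean is $s=\alpha/(\alpha+\beta)$. The variance is $\Var(\theta)=\alpha\beta/[(\alpha+\beta)^2(\alpha+\beta+1)]$. Since $s(1-s)=\alpha\beta/(\alpha+\beta)^2$, the intraclass correlation~\eqref{eq:icc} is $\rho=\Var(\theta)/[s(1-s)]=1/(\alpha+\beta+1)$, as stated.

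Next comes the exact miss rate. By Equation~\eqref{eq:coverage}, $\Prob[K=0]=\E_\theta[(1-\theta)^n]$. Writing this expectation against the Beta density gives
\[
\frac{1}{B(\alpha,\beta)}\int_0^1\theta^{\alpha-1}(1-\theta)^{\beta+n-1}\,d\theta=\frac{B(\alpha,\beta+n)}{B(\alpha,\beta)},
\]
by the definition of the Beta function. This is the first equality in~\eqref{eq:powerlaw}.

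For the asymptotics I would expand $B(\alpha,\beta+n)=\Gamma(\alpha)\Gamma(\beta+n)/\Gamma(\alpha+\beta+n)$ and $1/B(\alpha,\beta)=\Gamma(\alpha+\beta)/[\Gamma(\alpha)\Gamma(\beta)]$. The factor $\Gamma(\alpha)$ cancels, leaving
\[
\Prob[K=0]=\frac{\Gamma(\alpha+\beta)}{\Gamma(\beta)}\cdot\frac{\Gamma(n+\beta)}{\Gamma(n+\beta+\alpha)}.
\]
The only analytic step is the standard ratio asymptotic $\Gamma(x+a)/\Gamma(x+b)\sim x^{a-b}$ as $x\to\infty$. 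This follows from Stirling's formula, or from Wendel's inequality if one wants an elementary argument. Applying it with $x=n$, $a=\beta$, $b=\alpha+\beta$ gives $\Gamma(n+\beta)/\Gamma(n+\alpha+\beta)\sim n^{-\alpha}$, which is the claimed power law.

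To contrast with the exponential, I would note that $(1-s)^n$ decays geometrically, so it is $o(n^{-\alpha})$. Consistent with Proposition~\ref{prop:tax}, the actual miss rate therefore dominates the independent one by an ever-growing factor. Here $G$ has no atom at $0$, so $\pi_0=0$ in Proposition~\ref{prop:cov_mono} and coverage tends to $1$ at rate $n^{-\alpha}$.

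I expect the main obstacle to be the Gamma-ratio asymptotic. I would handle it by writing $\log\Gamma(x+a)-\log\Gamma(x+b)$ with Stirling's series $\log\Gamma(z)=(z-\tfrac12)\log z-z+\tfrac12\log 2\pi+O(1/z)$. Expanding $\log(x+a)=\log x+a/x+O(x^{-2})$, the result is $(a-b)\log x+o(1)$. Everything else is a direct calculation.
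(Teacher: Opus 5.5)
Your proposal is correct and follows the paper's proof essentially step for step: the Beta-integral evaluation of $\E[(1-\theta)^n]$, the Gamma-function rewriting in which $\Gamma(\alpha)$ cancels, and the Stirling ratio $\Gamma(x+a)/\Gamma(x+b)\sim x^{a-b}$, together with the moment computation of $\rho$ that the paper places in Appendix~\ref{app:derivations}. Your explicit log-Stirling expansion fills in the one step the paper only cites.
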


\begin{proof}
With $\theta\sim\mathrm{Beta}(\alpha,\beta)$, $\E[(1-\theta)^n] = B(\alpha,\beta+n)/B(\alpha,\beta)$ is the standard beta-binomial (a binomial whose success rate is itself drawn from the Beta) probability of zero successes. Writing it as $\frac{\Gamma(\alpha+\beta)}{\Gamma(\beta)}\cdot\frac{\Gamma(\beta+n)}{\Gamma(\alpha+\beta+n)}$ and using $\Gamma(\beta+n)/\Gamma(\alpha+\beta+n)\sim n^{-\alpha}$ (Stirling) gives the tail. The intraclass correlation of a beta-binomial is $\rho = 1/(\alpha+\beta+1)$.
\end{proof}

Proposition~\ref{prop:powerlaw} is the difficulty-heterogeneity story of \citet{schaeffer2025powerlaws} and \citet{kazdan2025predicting}, recovered here as the coverage face of the same model for a Beta difficulty prior: a heavy lower tail (small $\alpha$) yields the slowly rising, log-linear coverage that \citet{brown2024monkeys} fit empirically, with the exponent set by the lower tail of $G$. Figure~\ref{fig:coverage_powerlaw} contrasts the exponential and power-law regimes. The overdispersed binomial behind it (a count with more spread than independent draws would give) is classical \citep{skellam1948probability}. If $G$ additionally places an atom $\pi_0$ at $\theta=0$, a hard core of attempts the model never makes correct, then $\mathrm{pass}@n \to 1-\pi_0 < 1$ and no budget crosses it; this is mode collapse in its starkest form. The atom is a ceiling of model capability rather than of sampling: those problems lie outside the model's reach, a generalization limit that no sampling budget repairs \citep{bay2024generalization}.

The practical force of the power law is how slowly it pays. An exponential miss rate $(1-s)^n$ clears any target in a handful of samples, each one cutting the miss rate by a constant factor; a power-law miss rate $n^{-\alpha}$ does not. Halving an exponential miss rate costs a fixed number of further samples; halving a power law costs a fixed \emph{multiplicative} factor $2^{1/\alpha}$, so for a heavy lower tail (small $\alpha$) each successive halving costs as much sampling as everything before it combined. Coverage keeps rising, but with sharply diminishing speed: the slow, log-linear climb that the released logs show over four orders of magnitude. Figure~\ref{fig:coverage_powerlaw} plots both regimes on log--log axes, where the exponential falls off a cliff and the power law settles onto a straight line of slope $-\alpha$.

\begin{figure}[H]
\centering
\includegraphics[width=0.9\linewidth]{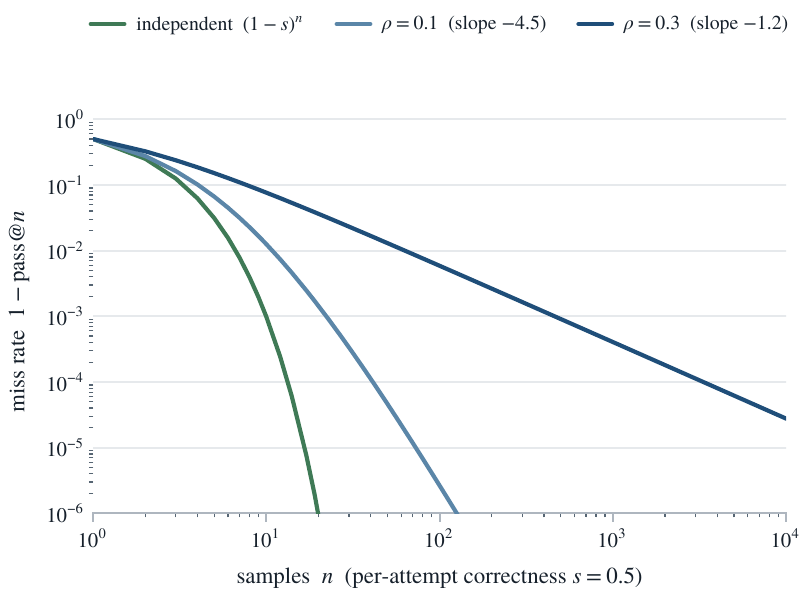}
\caption{Exponential versus power-law coverage. The miss rate $1-\mathrm{pass}@n$ on log--log axes: the independent exponential against the power law $n^{-\alpha}$ of Proposition~\ref{prop:powerlaw}. Here $s=0.5$ and $\rho\in\{0.1,0.3\}$.}
\label{fig:coverage_powerlaw}
\end{figure}

\subsection{Selection and the modal answer}
\label{sec:selection}

Selection is where the visible gains stop, and the cause is not correlation but the shape of the answer distribution. To return an answer without a verifier, a method keeps the most common answer (self-consistency \citep{wang2023selfconsistency}) or the highest-scored one (reward-model best-of-$n$); either way it reads the whole distribution over answer strings, not just the per-attempt correctness rate. Fix a problem $q$ and let the model induce a distribution $p_q(a)$ over candidate answers $a$, with correct answer $c_q$ and a unique most common answer, the \emph{mode} $a^\star_q = \arg\max_a p_q(a)$. A plurality vote among $n$ attempts returns the empirical mode.

\begin{proposition}[Selection ceiling]
\label{prop:vote}
As $n\to\infty$ the plurality answer converges almost surely to the mode $a^\star_q$, so self-consistency accuracy on a fixed problem tends to $\mathbf 1\{a^\star_q = c_q\}$. Averaged over a benchmark, self-consistency accuracy converges to the \emph{modal-hit rate}
\begin{equation}
\pi_{\mathrm{mode}} \;=\; \Prob_q\big[\,a^\star_q = c_q\,\big],
\label{eq:vote}
\end{equation}
a hard ceiling, the \emph{modal ceiling}, with no dependence on the sample budget $n$.
\end{proposition}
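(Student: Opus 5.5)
The plan is to work problem by problem and then average. Fix $q$ and suppose, as in the statement, that the $n$ answers are i.i.d.\ draws from $p_q$; this is the independent-draw reading in which $p_q$ is the per-session answer distribution. If one prefers the de Finetti picture of Equation~\eqref{eq:definetti}, the same argument can be run conditionally on the latent session parameter and then integrated. For each candidate $a$, let $\hat p_n(a)=\frac1n\sum_i\mathbf 1\{o_i=a\}$ be the empirical frequency. The strong law of large numbers gives $\hat p_n(a)\to p_q(a)$ almost surely for each fixed $a$.

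The main obstacle is that the answer space may be countably infinite, so pointwise convergence alone does not control the argmax. To handle this, set the margin $\delta=p_q(a^\star_q)-\sup_{a\ne a^\star_q}p_q(a)$. Uniqueness of the mode makes $\delta>0$, because the $p_q(a)$ are summable and the supremum over the other answers is attained. Next, take a finite set $F\ni a^\star_q$ with $p_q(F^c)<\delta/4$. On $F$, finitely many strong laws hold simultaneously, so eventually every $|\hat p_n(a)-p_q(a)|<\delta/4$ for $a\in F$. Off $F$, each $\hat p_n(a)$ is bounded by $\hat p_n(F^c)$, which converges almost surely to $p_q(F^c)<\delta/4$. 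The Glivenko--Cantelli theorem, or Scheffé's lemma, could replace this step, since either gives $\sup_a|\hat p_n(a)-p_q(a)|\to0$ almost surely directly.

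Combining the two bounds, eventually $\hat p_n(a^\star_q)>p_q(a^\star_q)-\delta/4$. Every other answer then has empirical frequency below $p_q(a^\star_q)-3\delta/4$. So the plurality winner equals $a^\star_q$ for all large $n$, almost surely, and ties are eventually impossible, which makes the tie-breaking rule irrelevant. It follows that $\mathbf 1\{\text{vote}_n=c_q\}\to\mathbf 1\{a^\star_q=c_q\}$ almost surely. By bounded convergence, the per-problem accuracy $\Prob[\text{vote}_n=c_q]$ converges to the same limit.

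Finally, average over problems. Benchmark self-consistency accuracy is $\E_q\,\Prob[\text{vote}_n=c_q]$. Each term is bounded by $1$ and converges pointwise in $q$, so dominated convergence (a finite average, for a finite benchmark) yields the limit $\E_q\mathbf 1\{a^\star_q=c_q\}=\pi_{\mathrm{mode}}$. This limit is determined by the modes $a^\star_q$ alone and contains no $n$, which gives the modal ceiling. I would also note that the unique-mode assumption is essential: with ties at the top, the limit depends on the tie-breaking rule and may not exist.
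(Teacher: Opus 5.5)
Your proof is correct and follows the paper's route: the strong law for the multinomial answer frequencies, then uniqueness of the mode forcing the empirical $\arg\max$ to settle on $a^\star_q$, then bounded and dominated convergence to pass to per-problem and benchmark-averaged accuracy. What you add is rigor, not a new route: the margin $\delta$ with the finite set $F$ (or Scheff\'e) gives the uniform convergence that the paper's one-line appeal to continuity of $\arg\max$ tacitly needs when the answer space is infinite.
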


\begin{proof}
For a fixed problem the empirical answer frequencies converge to $p_q$ almost surely (the law of large numbers applied to the multinomial counts), and $\arg\max$ is continuous wherever the maximizer is unique, so the empirical mode converges to $a^\star_q$ almost surely. Hence $\mathbf 1\{\text{plurality correct}\}\to\mathbf 1\{a^\star_q=c_q\}$, and averaging over problems gives the limit~\eqref{eq:vote}, which is constant in $n$.
\end{proof}

The ceiling is a wall, not a slowdown: once the budget reveals the mode, more samples cannot move the vote, and on the problems where the mode is wrong they move it the wrong way.

\begin{corollary}[Anti-scaling of selection]
\label{cor:antiscale}
On any problem whose mode is incorrect ($a^\star_q \ne c_q$) yet whose correct answer is reachable ($p_q(c_q)>0$), self-consistency accuracy falls to $0$ while coverage rises to $1$ as $n\to\infty$: more sampling makes selection worse and coverage better on the same problem.
\end{corollary}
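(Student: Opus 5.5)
The corollary combines two results already proved, applied to one fixed problem $q$ with $a^\star_q \ne c_q$ and $\pi := p_q(c_q) > 0$. I will treat the selection half and the coverage half separately and then put them side by side, since the claim is just that both limits hold at once on the same problem.

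\textbf{Selection half.} I will invoke Proposition~\ref{prop:vote}. On the fixed problem $q$, the plurality answer $\hat a_n$ converges almost surely to the mode $a^\star_q$, so $\mathbf 1\{\hat a_n = c_q\} \to \mathbf 1\{a^\star_q = c_q\} = 0$ almost surely. Self-consistency accuracy at budget $n$ is $\Prob[\hat a_n = c_q] = \E[\mathbf 1\{\hat a_n = c_q\}]$, and the indicator is bounded by $1$. Dominated convergence therefore turns the almost-sure convergence into $\Prob[\hat a_n = c_q] \to 0$.

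I also want a quantitative version to back the words ``more sampling makes selection worse''. Let $\delta = p_q(a^\star_q) - \pi > 0$, which is positive because the mode is unique and $c_q \ne a^\star_q$. The event $\{\hat a_n = c_q\}$ requires the empirical count of $c_q$ to be at least that of $a^\star_q$. The difference of those two counts is a sum of $n$ i.i.d.\ increments in $\{-1,0,1\}$ with mean $-\delta$. Hoeffding's inequality then gives
\begin{equation*}
\Prob[\hat a_n = c_q] \le \exp(-n\delta^2/2),
\end{equation*}
so selection accuracy decays exponentially to $0$.

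\textbf{Coverage half.} The fixed-problem clause of Proposition~\ref{prop:cov_mono} says that with reachability $\pi > 0$, coverage equals $1 - (1-\pi)^n$, which increases strictly to $1$.

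\textbf{Main obstacle.} The step I expect to need care is the precise meaning of ``falls''. Proposition~\ref{prop:vote} only gives a limit, and exact monotonicity of $\Prob[\hat a_n = c_q]$ in $n$ can fail at small $n$ because of tie-breaking and parity effects. I would state the corollary's claim as a limit, supported by the exponential bound above, and not assert pointwise monotone decrease. The independence across draws that Hoeffding needs comes from the i.i.d.\ answer draws already assumed within a problem in Proposition~\ref{prop:vote}'s proof.
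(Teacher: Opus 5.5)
Your proof is correct and follows the paper's argument. Proposition~\ref{prop:vote} sends the plurality to the wrong mode $a^\star_q$, and the fixed-problem clause of Proposition~\ref{prop:cov_mono} gives $1-(1-p_q(c_q))^n \to 1$. Your additions are valid refinements the paper omits: dominated convergence to pass from the almost-sure limit of the indicator to the accuracy probability, the Hoeffding rate $\exp(-n\delta^2/2)$ with $\delta = p_q(a^\star_q)-p_q(c_q)$, and the caveat that ``falls'' means a limit rather than monotone decrease.
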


\begin{proof}
By Proposition~\ref{prop:vote} the plurality converges to $a^\star_q\ne c_q$, so $\mathbf 1\{\text{plurality correct}\}\to0$, while $\mathrm{pass}@n = 1-(1-p_q(c_q))^n\to1$ because $p_q(c_q)>0$.
\end{proof}

Anti-scaling is the mechanism behind the limited and sometimes non-monotone returns that self-consistency and reward-model selection show in practice \citep{chen2024morecalls,wang2025diversity}: extra samples sharpen a confident wrong answer. How fast a problem reaches its ceiling is set by the concentration of $p_q$, summarized by the \emph{effective number of answers} $1/\sum_a p_q(a)^2$; the ceiling \emph{height} $\pi_{\mathrm{mode}}$ is set by whether the mode it converges to happens to be correct. Figure~\ref{fig:anti_scaling} shows both outcomes: coverage rises to one whether or not the mode is correct, while selection rises to one only when the mode is correct and otherwise falls to zero. Self-consistency and plurality voting meet this modal ceiling exactly. Best-of-$n$ with a learned reward model is bounded instead by how well that scorer ranks answers: a perfect verifier lifts it to coverage, a frequency-like scorer collapses it to the modal ceiling, and real reward models fall between.

\begin{figure}[H]
\centering
\includegraphics[width=0.9\linewidth]{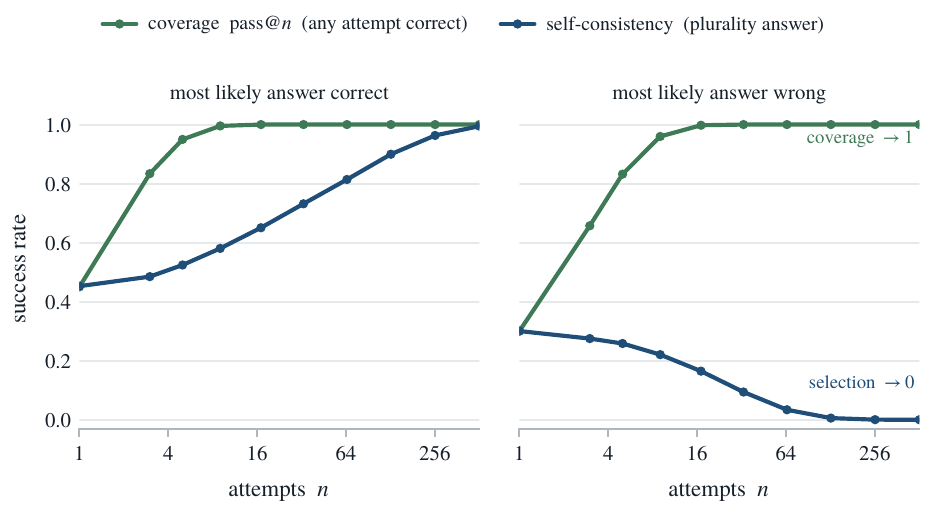}
\caption{Anti-scaling of selection. Two problems sampled to $n$ attempts each: coverage (any attempt correct) and self-consistency (the plurality answer) against $n$. Left: a problem whose most common answer is correct. Right: a problem whose most common answer is wrong.}
\label{fig:anti_scaling}
\end{figure}

The familiar majority-vote reading is the binary special case, and it is a strict lower bound. Collapse the answers to two, the correct one against a single modal error, and let $\theta$ be the latent per-attempt correctness of Equation~\eqref{eq:definetti}; then plurality is majority, $K/n\to\theta$ almost surely, and the vote is correct in the limit exactly when $\theta>\tfrac12$.

\begin{corollary}[Condorcet jury, correlated]
\label{cor:condorcet}
If the difficulty distribution places positive mass on problems the model gets wrong more often than right ($\Prob[\theta<\tfrac12]>0$), majority-vote accuracy converges to $\Prob[\theta>\tfrac12]<1$, below the value $1$ that independent jurors reach when $s>\tfrac12$ \citep{ladha1992condorcet,boland1989majority}; the normal approximation $\theta\approx\mathcal N(s,\rho\,s(1-s))$ gives $\Prob[\theta>\tfrac12]\approx\Phi\big((s-\tfrac12)/\sqrt{\rho\,s(1-s)}\big)$. This never exceeds the plurality ceiling, $\Prob[\theta>\tfrac12]\le\pi_{\mathrm{mode}}$, because $\theta>\tfrac12$ forces every wrong answer below $\tfrac12$ and so below the correct one.
\end{corollary}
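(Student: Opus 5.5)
The plan is to condition on the latent rate $\theta$ of the mixture~\eqref{eq:definetti} and argue problem by problem, exactly as in Proposition~\ref{prop:vote}, before averaging over $G$. In the binary collapse, given $\theta$ the indicators $Y_i$ are i.i.d.\ Bernoulli$(\theta)$. The strong law gives $K/n\to\theta$ almost surely. So on $\{\theta>\tfrac12\}$ the event $\{K>n/2\}$ holds eventually and $\mathbf 1\{K>n/2\}\to1$ almost surely. On $\{\theta<\tfrac12\}$ it fails eventually and the indicator tends to $0$. Bounded convergence, applied first conditionally and then under $G$, then gives majority-vote accuracy $\Prob[K>n/2]\to\Prob[\theta>\tfrac12]$, provided the boundary atom $\{\theta=\tfrac12\}$ is null. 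I will state that as a standing tie-free assumption, mirroring the unique-mode assumption of Proposition~\ref{prop:vote}; otherwise the limit is only bracketed between $\Prob[\theta>\tfrac12]$ and $\Prob[\theta\ge\tfrac12]$.

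Strictness below $1$ is immediate: $\Prob[\theta>\tfrac12]\le1-\Prob[\theta<\tfrac12]<1$ by hypothesis. The contrast with independence uses Proposition~\ref{prop:deff}'s setting at $\rho=0$. Then $G$ is the point mass at $s$, and the same argument gives limit $\mathbf 1\{s>\tfrac12\}=1$, which is the classical Condorcet theorem. For the normal approximation, I match the first two moments of $G$. These are $\E\theta=s$ and $\Var\theta=\rho\,s(1-s)$, the latter by inverting the intraclass correlation identity~\eqref{eq:icc}. Standardizing gives $\Prob[\theta>\tfrac12]\approx1-\Phi\big((\tfrac12-s)/\sqrt{\rho s(1-s)}\big)=\Phi\big((s-\tfrac12)/\sqrt{\rho s(1-s)}\big)$. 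This is a heuristic, not a bound, and I will present it that way.

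The final inequality is a pointwise containment of events, followed by integration over problems. If $\theta=p_q(c_q)>\tfrac12$, then every wrong answer $a\ne c_q$ has $p_q(a)\le1-\theta<\tfrac12<\theta$. Hence $c_q$ is the unique maximizer and $a^\star_q=c_q$. This gives $\{\theta>\tfrac12\}\subseteq\{a^\star_q=c_q\}$, and taking probabilities over $q$ yields $\Prob[\theta>\tfrac12]\le\pi_{\mathrm{mode}}$. Note that this step identifies $\theta$ with the reachability $p_q(c_q)$ of the categorical model, and it holds for the full multi-answer distribution, not just the collapsed one.

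The main obstacle is making the two models coincide. $\theta$ is introduced as a de Finetti latent rate over sessions, while $\pi_{\mathrm{mode}}$ is defined through per-problem answer distributions. I would resolve this by declaring, for this corollary, that $G$ is the law of $p_q(c_q)$ over problems. The comparison is then an identity of events, and the tie issue at $\theta=\tfrac12$ is handled by the null-atom assumption above.
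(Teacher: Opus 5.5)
Your proposal is correct and follows the paper's own argument: the conditional strong law $K/n\to\theta$ makes the vote right in the limit exactly when $\theta>\tfrac12$, the moment match $\Var\theta=\rho\,s(1-s)$ gives the normal approximation, and the pointwise containment $\{\theta>\tfrac12\}\subseteq\{a^\star_q=c_q\}$ gives the comparison with $\pi_{\mathrm{mode}}$. The null-atom assumption at $\theta=\tfrac12$ and the explicit identification of $\theta$ with $p_q(c_q)$ are refinements the paper leaves implicit, not departures from it.
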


The two bounds part company on real logs, and the distance between them is the point. On the dependent-draw log of Section~\ref{sec:gap} the majority bound reads $\Prob[\theta>\tfrac12]=0.20$ while the plurality ceiling is $\pi_{\mathrm{mode}}=0.45$: more than half of the selectable accuracy comes from problems the model answers correctly less than half the time, where the correct answer is still the single most common one because the errors scatter. Figure~\ref{fig:majority_vote} shows the binary plateau and that simulated accuracy lands on it.

\begin{figure}[H]
\centering
\includegraphics[width=0.9\linewidth]{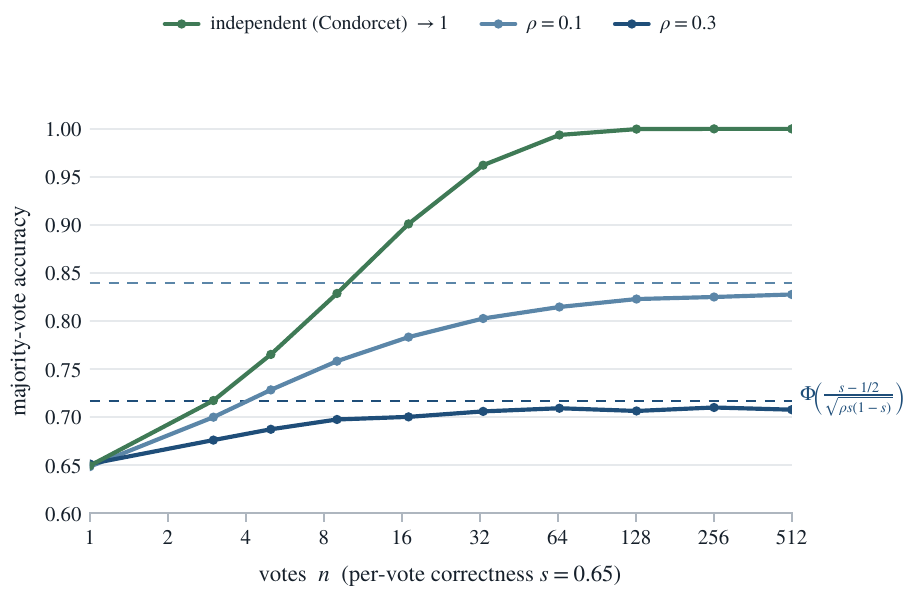}
\caption{The binary (majority-vote) special case of the selection ceiling. Majority-vote accuracy against $n$ for several $\rho$: the independent case ($\rho\to0$, top) and correlated cases plateauing at $\Prob[\theta>\tfrac12]$ (Corollary~\ref{cor:condorcet}), with its normal approximation dashed. Markers: simulation; $s=0.65$.}
\label{fig:majority_vote}
\end{figure}

\subsection{The coverage--selection gap, measured}
\label{sec:gap}

Theory has predicted a gap; this is where it is measured on real sampling logs. Put the two together. Coverage (Proposition~\ref{prop:tax}) climbs toward what the model can reach, $1-\pi_0$, slowly. Selection (Proposition~\ref{prop:vote}) saturates at the modal-hit rate $\pi_{\mathrm{mode}}$. The two diverge, and the divergence is the identifiability gap: the problems whose correct answer is in the pool but is not its most common answer. The split shows up in two released logs that probe the two effects separately: an independent-draw log for the between-problem difficulty spread, and a dependent-draw log for the within-problem answer collapse.

\textbf{The between-problem term, on independent draws.} \citet{brown2024monkeys} sampled up to $10^4$ solutions per problem on GSM8K \citep{cobbe2021gsm8k} (grade-school math word problems) and MATH \citep{hendrycks2021math} (harder competition mathematics) and recorded the correctness of each. Their attempts are drawn independently, so the within-problem correlation is zero by construction; what the logs expose is the \emph{between-problem} difficulty spread $\rho_b$ and its consequences: the $\rho_w=0$ face of the model. The analysis here estimates, per configuration, the difficulty mean $s$, the intraclass correlation $\hat\rho = \Var(\theta)/[s(1-s)]$, coverage by the unbiased estimator of \citet{chen2021codex}, and self-consistency by plurality vote of the extracted answers (validated against the logs' own correctness labels). The self-consistency plateau is reported only where the numeric extractor reproduces those labels; MATH answers are boxed expressions a numeric extractor cannot parse, so MATH shows coverage and $\hat\rho$ alone. Table~\ref{tab:empirical} reports them.

\begin{table}[H]
\centering
\caption{The between-problem difficulty correlation $\hat\rho_b$ and the coverage--selection gap, from the released logs of \citet{brown2024monkeys} ($10^4$ samples per problem). The $\hat\rho_b$ $95\%$ CI is a problem-level clustered bootstrap ($10^4$ resamples). Self-consistency is n/a on MATH (boxed answers a numeric extractor cannot parse).}
\label{tab:empirical}
{\small
\renewcommand{\arraystretch}{1.2}
\begin{tabular}{@{}llcccccc@{}}
\toprule
\textbf{Benchmark} & \textbf{Model} & $s$ & $\hat\rho_b$ & $\hat\rho_b$ 95\% CI & $1/\hat\rho_b$ & coverage@$10^4$ & self-cons. \\
\midrule
GSM8K & Llama-3-8B-Instruct  & 0.77 & 0.47 & $[0.40,\,0.54]$ & 2.1 & 1.00 & 0.87 \\
GSM8K & Llama-3-70B-Instruct & 0.93 & 0.41 & $[0.24,\,0.54]$ & 2.5 & 1.00 & 0.97 \\
MATH  & Llama-3-8B-Instruct  & 0.27 & 0.48 & $[0.41,\,0.55]$ & 2.1 & 0.98 & n/a \\
MATH  & Llama-3-70B-Instruct & 0.48 & 0.61 & $[0.54,\,0.66]$ & 1.6 & 0.98 & n/a \\
\bottomrule
\end{tabular}}
\end{table}

Two readings stand out. First, the difficulty correlation is large and stable, $\hat\rho\approx0.4$--$0.6$ across models and benchmarks, so by Equation~\eqref{eq:neff} the $10^4$ samples drawn for each problem carry the benchmark-mean information of only $\neff\approx2$ independent samples: for estimating benchmark accuracy, ten thousand samples of one problem are worth about two independent ones. Second, coverage and selection diverge in the direction the split predicts. Figure~\ref{fig:empirical_gap} shows the GSM8K, Llama-3-8B-Instruct curves: coverage reaches $1.00$ while self-consistency plateaus at $0.87$. For roughly an eighth of problems the correct answer is somewhere in the pool but the vote does not return it even at the plateau: the usable-signal gap, on real data.

\begin{figure}[H]
\centering
\includegraphics[width=0.9\linewidth]{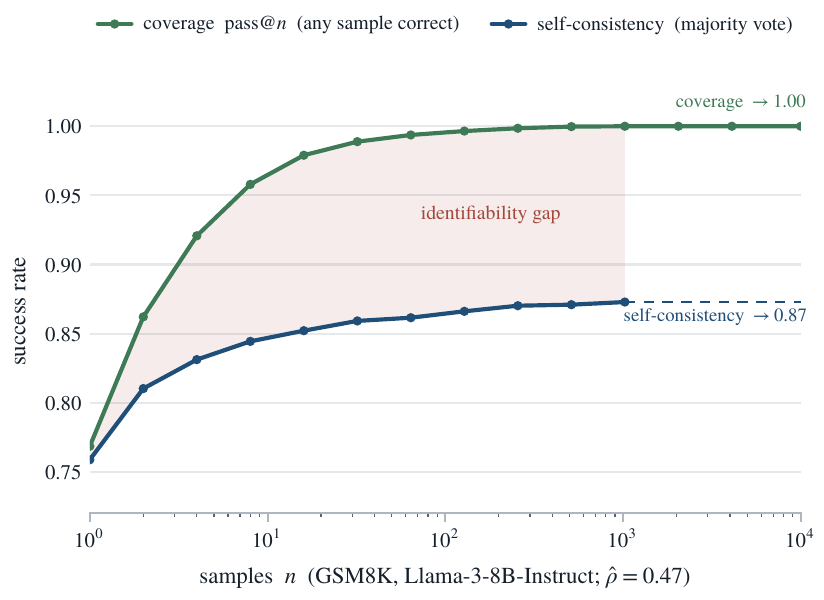}
\caption{The coverage--selection gap on real logs \citep{brown2024monkeys}: GSM8K, Llama-3-8B-Instruct, up to $10^4$ samples per problem. Coverage rises to $1.00$; self-consistency plateaus at $0.87$; the shaded band is the usable-signal gap.}
\label{fig:empirical_gap}
\end{figure}

The reading is consistent across the repeated-sampling literature. \citet{chen2024morecalls} find voting performance non-monotone in the number of calls because a benchmark mixes easy and hard problems, the anti-scaling of Corollary~\ref{cor:antiscale}; \citet{wang2025diversity} show that prompt-diversity interventions, which spread the answer distribution, are what move best-of-$n$; and \citet{kirk2024diversity} document that post-training (the fine-tuning applied after pretraining) sharpens the modal answer. Each is a face of one split: selection is held at its modal ceiling, coverage is not.

\textbf{The selection ceiling, on raw answers.} The modal-hit rate $\pi_{\mathrm{mode}}$ needs the answer strings, which the correctness-only logs above do not record; a log of raw completions supplies them. The best-of-$n$ release of \citet{beeching2024scaling} samples each of the $500$ MATH-500 problems $256$ times from one model (Llama-3.2-1B-Instruct) at a fixed decoding configuration (temperature $0.8$, top-$p$ $1.0$), recording every raw completion. Across the $500$ problems the $256$ answers carry a median of only about thirteen distinct values (an effective answer count $1/\sum_a p_a^2$), not $256$: the answer distribution $p_q$ is sharply concentrated. Figure~\ref{fig:within_session} measures the consequence: coverage (any attempt correct) climbs to $0.88$ while self-consistency (the plurality answer) plateaus at $\pi_{\mathrm{mode}}=0.45$ by about $n=64$ and moves little after. Correctness is graded with the same verifier that produced the dataset's labels (\texttt{math-verify}), reproducing its reported single-sample accuracy ($\approx0.27$) to about a point. The binary majority bound on the same log reads only $\Prob[\theta>\tfrac12]=0.20$: half the realized selection accuracy comes from problems solved less than half the time whose scattered errors leave the correct answer as the single most common one. The plateau is the modal ceiling of Proposition~\ref{prop:vote}, on real attempts: the correct answer reaches the pool for nearly nine problems in ten, but the most common answer is correct for only four and a half.

\begin{figure}[H]
\centering
\includegraphics[width=0.9\linewidth]{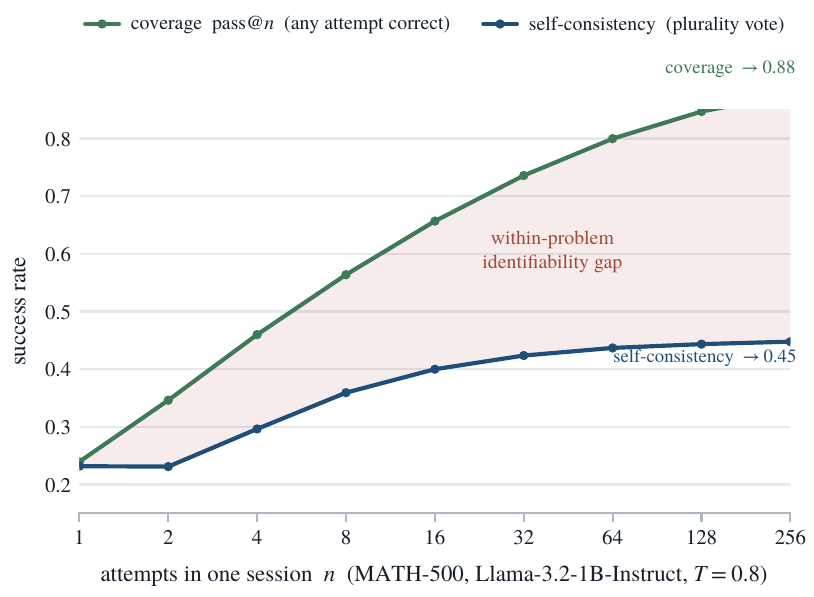}
\caption{The within-problem coverage--selection gap on a dependent-draw log \citep{beeching2024scaling}: MATH-500, Llama-3.2-1B-Instruct, $256$ attempts per problem in one session (averaged over five sessions). Coverage rises to $0.88$; self-consistency plateaus at $0.45$; the shaded band is the within-problem usable-signal gap.}
\label{fig:within_session}
\end{figure}

The five sessions also test the decomposition itself. Estimating $\rho_w$ as the run-to-run dispersion of a problem's success rate across the five independent sessions, corrected for the binomial noise of a finite session, gives $\hat\rho_w$ near zero ($\hat\rho_w \approx 0.0007$, $95\%$ CI $[0.0005,\,0.0009]$): re-running a fixed model at a fixed temperature does not move the latent success rate beyond sampling noise, so the de Finetti rate $\theta$ is nearly fixed per problem and the same-session pooled correlation is dominated by difficulty. The two-stage identity then holds on real data to within $0.001$: the directly pooled same-session correlation is $\hat\rho=0.401$, against $\hat\rho_b + (1-\hat\rho_b)\hat\rho_w = 0.402$ from the separate terms (Equation~\eqref{eq:decomp}, with this log's own $\hat\rho_b=0.402$). The selection plateau in Figure~\ref{fig:within_session} therefore is not a between-session effect: it is the within-session answer collapse, the concentration of the answer distribution $p_q$ rather than any drift in the success rate, which Proposition~\ref{prop:vote} prices as the modal-hit rate $\pi_{\mathrm{mode}}$ and Corollary~\ref{cor:condorcet} lower-bounds by $\Prob[\theta>\tfrac12]$.

\begin{takeaway}
Repeated sampling answers two questions that a single accuracy number blurs. \emph{Coverage} asks whether a correct answer is reachable at all, and keeps rising with the budget. \emph{Selection} asks whether the model's most common answer is the correct one, and is capped at the modal-hit rate $\pi_{\mathrm{mode}}$. Their difference is the \emph{identifiability gap}: the problems a model can reach but not return.
\end{takeaway}

\section{Decomposition, allocation, and measurement}
\label{sec:practice}

A lens earns its place only if it can be measured and acted on. This section splits the correlation into its two sources, turns the ceiling into a compute-allocation rule, and gives a one-line estimator for $\rho$ that runs on any sampling log already in hand.

\subsection{Within- and between-problem correlation}
\label{sec:decomp}

Two sources feed the pooled correlation $\rho$, and separating them is what tells a benchmark mean from a single answer. \emph{Within-problem} correlation $\rho_w$ is run-to-run dependence on a fixed problem: the dispersion of the latent rate $\theta$ across independent sessions in Equation~\eqref{eq:icc}. Under standard independent decoding this is zero by construction, each attempt an independent draw given the prompt, and Section~\ref{sec:gap} measures it at $\hat\rho_w\approx0.0007$. \emph{Between-problem} heterogeneity $\rho_b$ is the spread of difficulty across a benchmark: two attempts to a randomly drawn problem move together merely because they share that problem's difficulty. For attempts pooled over a benchmark the variance components add, so the pooled intraclass correlation (taking $\rho_w$ common across problems) satisfies
\begin{equation}
\rho \;=\; \rho_b + (1-\rho_b)\,\rho_w ,
\label{eq:decomp}
\end{equation}
the standard two-stage design effect of clustered sampling \citep{cochran1977sampling}: the total same-problem correlation is the between-problem difficulty spread plus the within-problem dependence left after it. With $\rho_w\approx0$ the pooled $\rho$ is essentially $\rho_b$, so the correlation ceiling of Section~\ref{sec:neff} is a statement about \emph{estimating} a benchmark from heterogeneous problems, not about repeated tries on one problem. The slow, power-law coverage of a benchmark is the same $\rho_b$ through Proposition~\ref{prop:powerlaw}, the regime studied by \citet{schaeffer2025powerlaws}. The selection ceiling is a third axis, the concentration of the answer distribution $p_q$ (Section~\ref{sec:selection}), which $\rho$ does not see: difficulty heterogeneity, run-to-run dependence, and answer-mode collapse are three distinct quantities the nominal sample count silently confounds.

\subsection{A compute-allocation rule}
\label{sec:allocate}

This is the practical payoff: with a fixed compute budget, the goal decides how to spend it, because the three goals reach their ceilings at different points. Let total inference compute be $C \approx n L$ for $n$ samples of reasoning length $L$. The levers are more samples (raise $n$), longer reasoning (raise $L$, which raises $s$), more problems (for evaluation), and decorrelation (spread the answer distribution via temperature, nucleus (top-$p$) sampling, prompt diversity, or mixing models) \citep{holtzman2020curious,wang2025diversity,muennighoff2025s1}. The ceilings of Sections~\ref{sec:neff} and~\ref{sec:selection} (Corollaries~\ref{cor:ceiling}--\ref{cor:marginal} and Proposition~\ref{prop:vote}) set the stopping point for each:

\begin{takeaway}
To estimate a benchmark mean, $n \approx 1/\rho_b$ samples per problem suffice (about two at the measured $\rho_b$); spend the rest on more problems. To select an answer without a verifier, sampling helps only until the plurality stabilizes, on the order of the effective answer count $1/\sum_a p_q(a)^2$, and past that it can anti-scale. To cover, that is to find a correct sample for a verifier, there is no within-problem ceiling and more samples keep paying.
\end{takeaway}

The rule reframes the ``think longer or sample more'' question \citep{snell2024scaling,wu2024inference,liu2025surpass}: sampling pays without limit for coverage, stops early for selection, and stops almost at once for estimation, so the budget should follow the goal rather than a single number. Whether decorrelation actively raises the selection plateau is the lever the protocol below tests.

\subsection{Estimating $\rho$}
\label{sec:estimate}

The ceiling is actionable only if $\rho$ can be read off runs already in hand, and it can. The correlation is measurable from any sampling log that records, for each of $M$ problems, the number correct $c_i$ out of $n_i$ attempts. The standard moment (analysis-of-variance) estimator for clustered binary data \citep{cochran1977sampling} compares the between- and within-problem sums of squares of $\hat p_i = c_i/n_i$,
\begin{equation}
\hat\rho \;=\; \frac{\mathrm{MS}_{\text{between}} - \mathrm{MS}_{\text{within}}}
{\mathrm{MS}_{\text{between}} + (n_0-1)\,\mathrm{MS}_{\text{within}}},
\label{eq:icc_est}
\end{equation}
with $n_0$ the average cluster size; equivalently, the beta-binomial model of \citet{kazdan2025predicting} has intraclass correlation $\hat\rho = 1/(\hat\alpha+\hat\beta+1)$ by the identity in Appendix~\ref{app:derivations}. Two cautions. On benchmarks with many fully-solved or never-solved problems the within-problem variance is degenerate and Equation~\eqref{eq:icc_est} can return a small or negative $\hat\rho$, which should be clipped at zero; and a pooled $\hat\rho$ mixes $\rho_b$ and $\rho_w$, so it states how much of the nominal $n$ is real only at the level (a single problem or a whole benchmark) at which it was estimated. Table~\ref{tab:empirical} runs the estimator on the logs of \citet{brown2024monkeys}; because those attempts are drawn independently, what it recovers is the between-problem term $\rho_b$, with a clustered (problem-level) bootstrap interval that resamples problems with replacement. The within-problem term needs dependent draws, which Section~\ref{sec:gap} reads off the single-session log of \citet{beeching2024scaling}: the seed-to-seed spread of a problem's success rate is negligible there ($\hat\rho_w\approx0.0007$), so re-decoding one prompt barely moves the latent rate, yet within a session the \emph{answer} distribution still narrows to about thirteen modes and self-consistency plateaus at $0.45$. The single-model selection ceiling is therefore set not by run-to-run drift but by the concentration of the answer distribution, the modal-hit rate $\pi_{\mathrm{mode}}$ of Proposition~\ref{prop:vote}. Genuine correlation does cap voting once the voters are distinct models rather than repeated draws of one: across nine distinct judge models the mean pairwise error correlation is $\hat\rho\approx0.39$, so the nine are worth only about two effective votes \citep{kohli2026judges}. For repeated draws of a single model, by contrast, a competition report finds high-temperature sampling already largely decorrelates the errors \citep{nitarach2026aimo}, so what holds selection down there is not run-to-run correlation but the concentration of the answer mode: the handful of effective answers that makes selection saturate so early.

The practical recommendation is one line, meant to be copied into a methods section verbatim:
\begin{takeaway}
Estimate $\hat\rho$ from the sampling log via Equation~\eqref{eq:icc_est}, then report the \emph{effective number of samples} $\neff = n/[1+(n-1)\hat\rho]$ alongside the nominal count $n$, with the ceiling $1/\hat\rho$.
\end{takeaway}
This is the design-effect-corrected count of \citet{kish1965survey}, and it lets a reader see how much of a sampling budget is real.

\paragraph{A pre-registered protocol for the decoding lever.}
Section~\ref{sec:gap} measures the within-problem ceiling at one decoding configuration: the latent run rate $\theta$ is nearly fixed per problem ($\hat\rho_w\approx0$), so the selection plateau is set by the dispersion of the modal answer within a session, not by run-to-run drift. What remains to be measured is the \emph{lever}: whether the decoding choices that decorrelate answers raise the plateau, the prediction the allocation rule rests on. The following test is pre-registered. Fix a benchmark and a model. For each of $M\ge200$ problems and each of several decoding configurations (a sweep over temperature, nucleus $p$, and prompt diversity), draw $m\ge256$ verified attempts; measure the within-session answer-effective count $1/\sum_a p_a^2$, the plurality plateau, and the answer-indicator intraclass correlation. The pre-registered prediction, from Proposition~\ref{prop:vote}, is that configurations with a lower answer correlation have a higher plurality plateau, monotone in the effective answer count; the falsifiable alternative is that the plateau is flat in the decoding configuration, under which decorrelation would not be the lever the rule names. This requires model inference and is left to follow-on work; it is stated now so the decoding lever is testable rather than assumed.

\subsection{A practitioner's reference}
\label{sec:summary}

Everything needed to apply the lens fits on a page: a short checklist of what to do, a table of the formulas, and a reading of the common methods. The checklist comes first, and these are diagnostics, not universal defaults.
\begin{enumerate}
\item For evaluation, report $\neff$, not just $n$: for the benchmark mean a budget of $n$ per problem is worth $n/[1+(n-1)\rho_b]$ independent observations, at most $1/\rho_b$, so add problems rather than samples.
\item Estimate $\rho_b$ from the log with Equation~\eqref{eq:icc_est}, clipped at zero on saturated benchmarks.
\item Distinguish the three goals: coverage keeps improving with $n$, selection plateaus at $\pi_{\mathrm{mode}}$ and can anti-scale, estimation saturates by $n\approx1/\rho_b$.
\item For selection, lower the answer concentration (temperature, nucleus sampling, prompt diversity, model mixing) rather than raising $n$; sampling past the effective answer count buys nothing.
\item Separate the answer ceiling from the verifier ceiling: one is whether the correct answer is the mode, the other whether the right try is identifiable by the scorer.
\end{enumerate}

A concrete pass through the checklist fixes the workflow. Take a benchmark on which the difficulty correlation is $\hat\rho_b\approx0.5$ and the answers concentrate onto about a dozen modes per problem, the regime of the logs in Section~\ref{sec:gap}. For evaluation, a budget of $256$ samples per problem is worth only $\neff\approx1/\rho_b\approx2$ independent observations of the benchmark mean, so a confidence interval computed as if the $256$ were independent is too narrow by roughly elevenfold ($\sqrt{256/2}$); the honest move is to report $\neff$ and to spend fresh compute on fresh problems. For deployment with a verifier, the same $256$ samples keep paying, since coverage carries no within-problem ceiling. For deployment without one, self-consistency has captured almost all it will within a few dozen samples, a small multiple of the effective answer count, and drawing the full $256$ risks anti-scaling on the problems whose mode is wrong.

Three numbers drive the three goals: the difficulty correlation $\rho_b$, the effective answer count, and the verifier's accuracy. The first two are read off any sampling log already in hand by the estimators above, and the third is a property of the scorer. Reading them once states how much of a budget is real for each use, which is the whole of what the lens asks of a practitioner. Table~\ref{tab:formulas} collects the formulas behind the checklist, each with the behavior it explains, from the design effect that opens the argument to the modal-hit rate that closes it.

\begin{table}[H]
\centering
\caption{Formulas for correlated test-time scaling. Here $n$ is the number of samples, $s$ the per-attempt success probability, and $\rho$ the intraclass correlation of the success indicators.}
\label{tab:formulas}
{\small
\renewcommand{\arraystretch}{1.22}
\begin{tabular}{@{}p{0.27\linewidth}p{0.37\linewidth}p{0.28\linewidth}@{}}
\toprule
\textbf{Quantity} & \textbf{Formula} & \textbf{Interpretation} \\
\midrule
Design effect & $\deff = 1+(n-1)\rho$ & Correlation inflates the variance of the count. \\
\addlinespace[2pt]
Effective number of samples & $\neff = n/[1+(n-1)\rho]$ & The usable count of independent draws. \\
\addlinespace[2pt]
Correlation ceiling & $\neff \to 1/\rho$ & Sampling cannot exceed $1/\rho$ effective draws. \\
\addlinespace[2pt]
Value of the $n$-th sample & $(1-\rho)/[1+(n-1)\rho]^2$ & Marginal worth decays as $1/(\rho n)^2$. \\
\addlinespace[2pt]
Coverage tax & $\mathrm{pass}@n \le 1-(1-s)^n$ & Correlation runs coverage below the independent curve. \\
\addlinespace[2pt]
Coverage tail & $\Prob[K{=}0]\sim \tfrac{\Gamma(\alpha+\beta)}{\Gamma(\beta)}\,n^{-\alpha}$ & Heterogeneity gives power-law, not exponential, coverage. \\
\addlinespace[2pt]
Selection ceiling & $\pi_{\mathrm{mode}} = \Prob_q[\,a^\star_q\text{ correct}\,]$ & Self-consistency cannot beat the modal-hit rate. \\
\addlinespace[2pt]
Effective number of answers & $1/\sum_a p_q(a)^2$ & Sets how fast plurality reaches its ceiling. \\
\addlinespace[2pt]
Majority-vote bound & $\Prob[\theta>\tfrac12]\approx\Phi\big(\tfrac{s-1/2}{\sqrt{\rho s(1-s)}}\big)$ & Lower bound on $\pi_{\mathrm{mode}}$, loose when errors scatter. \\
\bottomrule
\end{tabular}}
\end{table}

The formulas price each behavior; reading the methods as estimands then assigns each its ceiling. Coverage escapes every ceiling because it asks only whether a correct sample exists; self-consistency and best-of-$n$ meet the modal-answer wall because they read the answer distribution; and only benchmark-mean estimation meets the correlation ceiling, the one place the design effect of Section~\ref{sec:neff} truly binds.

\begin{table}[H]
\centering
\caption{Test-time-scaling methods as estimands, with the ceiling that binds each.}
\label{tab:methods}
{\small
\setlength{\tabcolsep}{5pt}
\renewcommand{\arraystretch}{1.25}
\begin{tabular}{@{}llll@{}}
\toprule
\textbf{Method} & \textbf{Reads} & \textbf{Estimand} & \textbf{Ceiling} \\
\midrule
Coverage / $\mathrm{pass}@n$ (with a verifier) & any correct sample & existence indicator & reachability, no $1/\rho$ \\
Self-consistency / plurality & most common answer & mode of $p_q$ & $\pi_{\mathrm{mode}}$ \\
Best-of-$n$ (reward model) & top-scored sample & argmax of reward & reward-model ranking \\
\rowcolor{cnavy!9} Benchmark-mean estimate & success fraction $\hat p$ & sample mean & $\neff\le1/\rho_b$ \\
Single sample / greedy & one answer & single draw & n/a \\
\bottomrule
\end{tabular}}
\end{table}

\section{Conclusion}
\label{sec:conclusion}

More sampling makes coverage climb while selection stalls; what separates them is the \emph{identifiability gap}, the solvable problems whose answer a vote never returns. Selection stalls because it meets the modal ceiling $\pi_{\mathrm{mode}}$, fixed by how often the most common answer happens to be right: on the dependent-draw log of \citet{beeching2024scaling} the $256$ tries per problem reduce to about thirteen, coverage reaches $0.88$ while selection is right for only $0.45$, and drawing more only sharpens a confident error. A different use meets a different limit: estimating a benchmark mean is capped by the correlation ceiling $1/\rho_b$, so with $\hat\rho_b\approx0.4$--$0.6$ on the released logs of \citet{brown2024monkeys}, ten thousand attempts on a single problem buy the precision of about two, while the seed-to-seed term stays near zero ($\hat\rho_w\approx0.0007$), leaving coverage with no within-problem ceiling. One sample count thus stands in for three different things at once: difficulty spread between problems, dependence between runs, and the collapse of answers onto a mode.

\textbf{When to stop.} Because each draw costs compute and both ceilings are low, the budget that pays is small and set by the goal: about $1/\rho_b$ samples to estimate a benchmark mean, on the order of the effective number of answers to select one, and no limit for coverage where a verifier can pick the correct sample out. Reporting the effective number of samples beside the nominal $n$, a closed form solved for rather than searched \citep{bay2026kore}, says how much of a budget actually counts, and where it stops paying. The bottleneck in test-time scaling has moved from generating a correct answer to recognizing one: coverage shows the answers are already present, the modal ceiling shows that more sampling will not surface them, and the compute that extra draws cannot use is better spent correlating answers less and choosing among them better, not drawing more.

\bibliographystyle{unsrtnat}
\bibliography{references}

\clearpage
\appendix
\titleformat{\section}{\normalfont\large\bfseries}{Appendix \thesection:}{0.45em}{}

\section{Elementary derivations}
\label{app:derivations}

The body states the propositions and leans on these short calculations without pausing for them; each one underwrites a result above.

\paragraph{Beta-binomial moments and intraclass correlation.}
Take the latent rate $\theta\sim\mathrm{Beta}(\alpha,\beta)$ with attempts $Y_i\mid\theta\sim\text{i.i.d.\ Bernoulli}(\theta)$. The Beta has mean and variance
\[
s = \E[\theta] = \frac{\alpha}{\alpha+\beta}, \qquad
\Var(\theta) = \frac{\alpha\beta}{(\alpha+\beta)^2(\alpha+\beta+1)} = \frac{s(1-s)}{\alpha+\beta+1}.
\]
Two distinct attempts share only the rate $\theta$, so their covariance is the variance of that shared rate: for $i\ne j$, $\Cov(Y_i,Y_j) = \E[\theta^2]-s^2 = \Var(\theta)$. Dividing by $s(1-s)$ gives the intraclass correlation of Equation~\eqref{eq:icc},
\[
\rho = \frac{\Var(\theta)}{s(1-s)} = \frac{1}{\alpha+\beta+1},
\]
which does not depend on $n$. Inverting, a target mean $s$ and correlation $\rho$ are realized by $\alpha = s(1-\rho)/\rho$ and $\beta = (1-s)(1-\rho)/\rho$, the map the figures use to set a Beta from $(s,\rho)$.

\paragraph{Zero-success probability and its tail.}
The chance that none of $n$ attempts succeeds is the Beta average of $(1-\theta)^n$, a standard beta integral:
\[
\Prob[K=0] = \E_\theta\big[(1-\theta)^n\big]
= \int_0^1 (1-\theta)^n \frac{\theta^{\alpha-1}(1-\theta)^{\beta-1}}{B(\alpha,\beta)}\,d\theta
= \frac{B(\alpha,\beta+n)}{B(\alpha,\beta)}.
\]
Written with gamma functions this is $\frac{\Gamma(\alpha+\beta)}{\Gamma(\beta)}\cdot\frac{\Gamma(\beta+n)}{\Gamma(\alpha+\beta+n)}$. For large $n$ the gamma ratio decays polynomially, $\Gamma(\beta+n)/\Gamma(\alpha+\beta+n)\sim n^{-\alpha}$ (Stirling, $\Gamma(x+a)/\Gamma(x+b)\sim x^{a-b}$), so
\[
\Prob[K=0] \sim \frac{\Gamma(\alpha+\beta)}{\Gamma(\beta)}\, n^{-\alpha} \qquad (n\to\infty).
\]
This is the power-law tail of Proposition~\ref{prop:powerlaw}: coverage approaches its limit polynomially, not exponentially.

\paragraph{The selection plateau.}
Plurality returns the most frequent answer, and the answer counts over $n$ attempts are $\mathrm{Multinomial}(n,p_q)$. By the law of large numbers the empirical frequencies converge to $p_q$ almost surely, so the empirical mode converges to the true mode $a^\star_q=\arg\max_a p_q(a)$, and per-problem plurality accuracy to $\mathbf 1\{a^\star_q=c_q\}$. Averaging over problems gives the modal-hit rate $\pi_{\mathrm{mode}}$ of Equation~\eqref{eq:vote}.

The two-answer case recovers the majority vote. With only a correct answer and one error, write $\theta$ for the per-attempt correctness; then plurality is majority and $\hat p=K/n\to\theta$ almost surely, so in the limit the majority is right precisely if $\theta>\tfrac12$. Its variance tends to $\Var(\hat p)\to\rho\,s(1-s)$, and approximating $\theta$ as $\mathcal N\big(s,\,\rho\,s(1-s)\big)$ gives the closed form
\[
\Prob[\theta>\tfrac12] \approx \Phi\!\left(\frac{s-\tfrac12}{\sqrt{\rho\,s(1-s)}}\right)
\]
of Corollary~\ref{cor:condorcet}. Because $\theta>\tfrac12$ pushes every rival answer under $\tfrac12$, hence under the correct one, this majority bound never exceeds $\pi_{\mathrm{mode}}$.

\paragraph{Two-stage design effect.}
Pool attempts over problems, and let $\mu_i$ be the success rate of problem $i$, with between-problem correlation $\rho_b = \Var(\mu_i)/[s(1-s)]$ and a common within-problem correlation $\rho_w$ given the problem. By the law of total covariance, two same-problem attempts $j\ne j'$ have
\[
\Cov(Y_{ij},Y_{ij'}) = \E\!\big[\rho_w\,\mu_i(1-\mu_i)\big] + \Var(\mu_i).
\]
Since $\E[\mu_i(1-\mu_i)] = s(1-s) - \Var(\mu_i) = s(1-s)(1-\rho_b)$, the right-hand side is $s(1-s)\big[\rho_b + (1-\rho_b)\rho_w\big]$. Dividing by $s(1-s)$ gives the pooled correlation
\[
\rho = \rho_b + (1-\rho_b)\rho_w
\]
of Equation~\eqref{eq:decomp}, and the same-problem design effect is $1+(n-1)\rho$ \citep{cochran1977sampling}.

\section{Reproducibility}
\label{app:repro}

Every number, table, and figure in this work regenerates from a clean checkout. The propositions are checked numerically by \texttt{scripts/verify\_math.py}, which verifies the identities of Sections~\ref{sec:neff}--\ref{sec:practice} (including the variance identity~\eqref{eq:varK}, the coverage monotonicity of Proposition~\ref{prop:cov_mono}, the modal-answer selection ceiling~\eqref{eq:vote} with its anti-scaling corollary and the majority-vote lower bound, and the two-stage decomposition~\eqref{eq:decomp}) against Monte Carlo simulation; all checks pass. The five model-based figures are generated by \texttt{scripts/make\_figures.py} from closed forms and fixed-seed simulation; the correlated attempts use the de Finetti representation $\theta\sim\mathrm{Beta}(\alpha,\beta)$, $Y_i\mid\theta\sim\text{Bernoulli}(\theta)$, with $(\alpha,\beta)$ set from $(s,\rho)$ by Appendix~\ref{app:derivations}. The between-problem figure and Table~\ref{tab:empirical} are produced by \texttt{scripts/analyze\_brown.py}, which downloads the public sampling logs of \citet{brown2024monkeys}, estimates $\hat\rho_b$ with a clustered (problem-level) bootstrap $95\%$ interval ($10^4$ resamples, fixed seed), computes the coverage and self-consistency curves, and writes a small summary that the figure script reads. The within-problem figure and the $\hat\rho_w$, $\hat\rho_b$, and decomposition numbers of Section~\ref{sec:gap} are produced by \texttt{scripts/analyze\_rhow.py}, which downloads the five-session best-of-$n$ log of \citet{beeching2024scaling}, grades every completion with \texttt{math-verify} (the verifier behind the dataset's own labels, whose reported single-sample accuracy it reproduces to about a point), estimates the between- and within-problem correlations with problem-clustered bootstrap intervals, and computes the within-session coverage and plurality curves; the graded per-problem counts are cached so the figure rebuilds without re-downloading or re-grading the multi-hundred-megabyte log. The Python environment is pinned in \texttt{pyproject.toml} (\texttt{uv sync}); \texttt{make all} regenerates everything. Seeds are fixed, so the results are reproducible. The code, the cached result summaries, and the manuscript source are available at \url{https://github.com/bay-yearick-lab/sampling-ceilings}.

\end{document}